\documentclass[conference,compsoc]{IEEEtran}
\usepackage[nocompress]{cite}
\usepackage{microtype}
\usepackage{graphicx}
\usepackage{booktabs}

\usepackage{amsmath}
\usepackage{amssymb}
\usepackage{mathtools}
\usepackage{amsthm}
\usepackage{algorithm}
\usepackage{algorithmic}
\usepackage[table]{xcolor}
\usepackage{xspace}
\usepackage{pifont}
\usepackage{threeparttable}
\usepackage{multirow}
\usepackage[hidelinks]{hyperref}
\usepackage[capitalize,noabbrev]{cleveref}

\theoremstyle{plain}
\newtheorem{theorem}{Theorem}[section]
\newtheorem{proposition}[theorem]{Proposition}

\newtheorem{corollary}[theorem]{Corollary}
\theoremstyle{definition}

\newtheorem{assumption}[theorem]{Assumption}
\theoremstyle{remark}
\newtheorem{remark}[theorem]{Remark}

\newcommand{\Keff}{K_{\text{eff}}}

\newcommand{\ie}{i.e.,\xspace}

\newcommand{\cmark}{\ding{51}}
\newcommand{\xmark}{\ding{55}}
\newcommand{\pmark}{{\color{orange}$\boldsymbol{\sim}$}}

\definecolor{randcol}{RGB}{214,183,165}
\definecolor{kindonlycol}{RGB}{190,143,112}
\definecolor{kindsemcol}{RGB}{123,125,103}
\definecolor{fullfpcol}{RGB}{165,162,132}
\definecolor{theorycol}{RGB}{205,203,188}

\newenvironment{packeditemize}{
	\begin{list}{$\bullet$}{
			\setlength{\labelwidth}{4pt}
			\setlength{\itemsep}{0pt}
			\setlength{\leftmargin}{\labelwidth}
			\addtolength{\leftmargin}{\labelsep}
			\setlength{\parindent}{0pt}
			\setlength{\listparindent}{\parindent}
			\setlength{\parsep}{0pt}
			\setlength{\topsep}{1pt}}}{\end{list}}

\DeclareMathOperator*{\argmin}{arg\,min}
\DeclareMathOperator{\tr}{tr}

\DeclareMathOperator{\Corr}{Corr}
\DeclareMathOperator{\Var}{Var}
\DeclareMathOperator{\Cov}{Cov}
\DeclareMathOperator{\MSE}{MSE}

\usepackage{tcolorbox}
\usepackage{fontawesome6}
\usepackage{simpleicons}

\definecolor{ghdark}{HTML}{24292F}
\definecolor{hfyellow}{HTML}{FFD21E}
\definecolor{hfink}{HTML}{1F2937}
\newcommand{\linkpill}[5]{%
  \href{#5}{\tcbox[on line,colback=#1,colframe=#1,boxrule=0pt,arc=7pt,
    left=6pt,right=7pt,top=2.5pt,bottom=2.5pt,boxsep=0pt,
    fontupper=\sffamily\upshape\small\color{#2}]{#3\hspace{0.45em}#4}}}
\usepackage{eso-pic}
\newcommand{\venuefooter}[1]{\AddToShipoutPictureFG*{\AtTextLowerLeft{\raisebox{-0.3in}{\footnotesize #1}}}}
\input{ot1ptm.fd}\input{ts1ptm.fd}
\DeclareFontShape{TS1}{ptm}{m}{sc}{<->ssub * ptm/m/n}{}
\DeclareFontShape{OT1}{ptm}{m}{scit}{<->ssub * ptm/m/it}{}
\let\appendix\appendices
\newenvironment{ack}{\section*{Acknowledgments and Disclosure of Funding}}{}
\hypersetup{pdftitle={Diversity Combining for Multi-Path LLM Reasoning},
pdfauthor={Guangsheng Yu, Litianyi Zhang, Qin Wang, Xu Wang, Mingyuan Li, Shaoxiong Ji, Ren Ping Liu, Massimo Piccardi}}

\begin{document}
\title{Diversity Combining for Multi-Path LLM Reasoning}
\author{\IEEEauthorblockN{Guangsheng Yu\textsuperscript{1}, Litianyi Zhang\textsuperscript{2}, Qin Wang\textsuperscript{3}, Xu Wang\textsuperscript{1},\\
Mingyuan Li\textsuperscript{4,5}, Shaoxiong Ji\textsuperscript{4,5}, Ren Ping Liu\textsuperscript{1}, and Massimo Piccardi\textsuperscript{1}}
\IEEEauthorblockA{\textsuperscript{1}University of Technology Sydney, \textsuperscript{2}The University of Sydney, \textsuperscript{3}CSIRO\\
\textsuperscript{4}ELLIS Institute Finland, \textsuperscript{5}University of Turku\\[0.7em]
\linkpill{ghdark}{white}{\faGithub}{OniReimu/DiversityCombining}{https://github.com/OniReimu/DiversityCombining}\hspace{0.7em}%
\linkpill{hfyellow}{hfink}{\simpleicon{huggingface}}{OniReimu/DiversityCombining}{https://huggingface.co/datasets/OniReimu/DiversityCombining}}}

\maketitle
\venuefooter{Accepted by NeurIPS 2026}

\begin{abstract}
Multi-path reasoning methods such as self-consistency (SC) sample $K$ reasoning paths and choose the most frequent answer. However, their gains quickly plateau as $K$ increases, and existing methods do not predict when this saturation will occur.
We formalize multi-path LLM reasoning as a diversity combining problem from wireless communications: each path is a noisy channel observation, and the pairwise correlation of path correctness caps the design-effect effective sample size of the vote at a finite ceiling.
Generalized least squares (GLS) analysis shows that, under exchangeability, the optimal symmetric linear combiner of latent embeddings is uniform, supporting majority vote as the natural default in standard SC while leaving room for weighting or pruning under heterogeneous prompt-template branches.
Across 5 models and 12 benchmarks, prompt-template diversity reduces path correlation in $55$ of $57$ valid cells, with the strongest effect on open-ended QA. We derive an Adaptive-K rule that uses a four-path pilot to select $K^*$, retaining $96$--$103\%$ of MV@$K{=}32$ accuracy across Math, QA, and NLU.
\end{abstract}

\section{Introduction}
\label{sec:intro}

Multi-path reasoning has become a standard technique for improving LLM answer reliability.
Self-consistency~\cite{wang2023selfconsistency} samples $K$ chain-of-thought paths and returns the majority-vote answer; extensions include weighted voting~\cite{cisc2025}, tree-structured search~\cite{yao2023tot}, and token-level embedding aggregation~\cite{softcot2025}.

However, a common empirical observation lacks a satisfactory theoretical explanation: accuracy gains from additional reasoning paths diminish rapidly, and beyond a model-dependent threshold $K^*$, adding more paths yields negligible improvement.
Existing analyses attribute this saturation to answer-space coverage or sampling temperature, yet these explanations remain qualitative and do not predict $K^*$ for a given model.
They also do not establish a formal relationship between the observable path agreement rate and the underlying correlation structure, which is what a saturation formula requires.

The classical Condorcet jury theorem~\cite{condorcet1785} predicts that majority voting among independent voters, each correct with probability above $1/2$, converges to certainty as $K\to\infty$, yet compound LLM inference saturates much earlier~\cite{chen2024morellmcalls}, indicating that path correctness is correlated across questions. While ensemble diversity measures~\cite{kuncheva2003,jeffares2023ensemble} quantify disagreement, they do not provide a closed-form saturation bound for LLM reasoning. We address this gap via an analogy to \textit{diversity combining} over \textit{multipath channels}: a receiver observes $K$ noisy signal copies, but path correlation reduces the effective diversity order below $K$, as determined by the channel correlation matrix~\cite{tse2005fundamentals}. We show that the correctness of reasoning paths sampled from the same model and prompt is positively correlated across questions, and that this correlation limits the gain from increasing $K$.
Fig.~\ref{fig:framework} illustrates the framework. The main contributions are:
\begin{packeditemize}
  \item \textbf{LLM-specific saturation diagnostic.} Adapting the classical design-effect~\cite{kish1965} and participation-ratio formulations to multi-path LLM reasoning, we cast reasoning paths as correlated branches and obtain the vote-level diagnostic $\Keff^{\text{vote}} = K/(1+(K{-}1)c)$ with ceiling $1/c$, where $c$ is the pairwise correctness correlation, a vote-level overdispersion measurable from outputs alone. Across three model families on GSM8K, $\Keff^{\text{vote}}$ at $K{=}32$ reaches $96$--$98\%$ of this ceiling (\S\ref{sec:theory}, \S\ref{sec:experiments}).
  \item \textbf{Uniform-weighting optimality under exchangeability.} Applying classical GLS analysis to the latent-embedding combiner, we show the optimal linear weights reduce to uniform under the equicorrelated model, supporting majority vote as the natural default in standard SC and leaving room for weighting or pruning when prompt-template branches become heterogeneous (\S\ref{sec:theory}, \S\ref{app:hetero_slot}).
  \item \textbf{Answer-space-associated decorrelation.} Across 5 models and 12 benchmarks, prompt-template perturbation reduces path correlation in $55$ of $57$ valid cells (mean $-38\%$), with magnitude associated with the task's answer-space structure: QA ($-67\%$) $\gg$ code ($-22\%$) $>$ math ($-9$ to $-29\%$); the QA--math gap is significant (Mann-Whitney $p{<}10^{-3}$, \S\ref{sec:cross_bench}). The channel framing is consistent with this ordering and motivates a single-SC-run predictor (Fig.~\ref{fig:entropy_predictor}: $r{=}{-}0.62$, $p{=}0.032$).
  \item \textbf{Adaptive-K design rule.} The diagnostic yields a closed-form operating point $K^*$ (Eq.~\ref{eq:adaptive_k}) from a fixed $K{=}4$ pilot, with no held-out tuning and no per-instance scorer. Cross-domain validation (Math, QA, NLU) shows this rule retains $96$--$103\%$ of accuracy at $K^*$ (\S\ref{sec:adaptive_k}); positioning relative to online-stopping and scaling-law approaches is summarized in Table~\ref{tab:comparison}.
\end{packeditemize}

\begin{figure*}[t]
\centering
\includegraphics[width=0.9\textwidth]{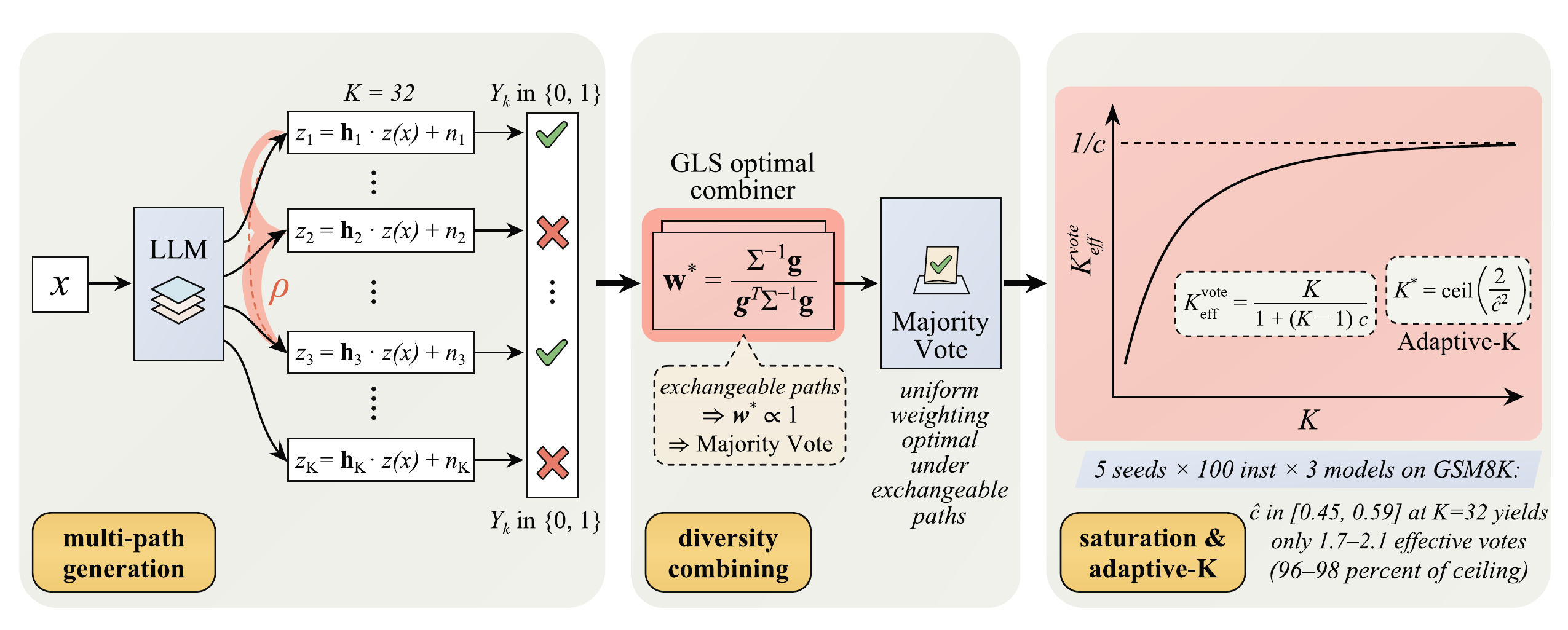}
\caption{Diversity combining for multi-path LLM reasoning. The LLM generates $K$ correlated paths $\mathbf{z}_k = h_k\mathbf{z}(x) + \mathbf{n}_k$ ($h_k$: reasoning quality; $\mathbf{n}_k$: errors; $\rho$: pairwise correlation), collapsed to correctness indicators $Y_k$. \textbf{Aggregation}: under exchangeability, the GLS-optimal linear combiner is uniform (Corollary~\ref{cor:special}), supporting uniform majority vote as the natural default. \textbf{Diagnostic}: $\Keff^{\text{vote}} = K/(1+(K{-}1)c)$ saturates at $1/c$; Adaptive-K estimates $K^*$ from a $K{=}4$ pilot.}
\label{fig:framework}
\end{figure*}

\section{Background and Related Work}
\label{sec:background}

\smallskip\noindent\textbf{Multi-path LLM reasoning.}
Self-consistency~\cite{wang2023selfconsistency} generates $K$ reasoning paths by sampling at temperature $\tau > 0$ and returns the plurality answer.
Extensions include tree-structured search~\cite{yao2023tot} and token-level embedding aggregation~\cite{softcot2025,softthinking2025}.
A parallel line on efficient reasoning~\cite{tokenbudget2024,budgetthinker2025,stopoverthinking2025,snell2024scaling} studies when and how to reduce multi-path compute, showing that adaptive scaling outperforms brute-force path multiplication.
These methods operate at the answer or token level and do not provide a theoretical framework for predicting the diminishing-returns threshold $K^*$.
Training-based approaches change how paths are generated. Global forking tokens~\cite{jia2026forking} train models toward diverse yet correct reasoning modes, and Native Parallel Reasoner~\cite{wu2026nativeparallel} trains models to reason in parallel branches within a single response. Our diagnostic measures the path correlation that such train-time interventions act on (Appendix~\ref{app:comparison}).

\smallskip\noindent\textbf{Diversity combining in multipath channels.}
In wireless communications, a receiver observes $K$ correlated copies of a transmitted signal through distinct paths with gain and additive noise; classical combiners (selection, maximal-ratio, MMSE) trade off complexity against correlation awareness~\cite{tse2005fundamentals}.
For an equally-correlated model with coefficient $\rho$, the latent effective rank scales as $K/(1+(K{-}1)\rho^2)$~\cite{simon2005digital}; the corresponding vote-level effective sample size $K/(1+(K{-}1)c)$ divides $K$ by the Kish design effect $1+(K{-}1)c$~\cite{kish1965}.

\smallskip\noindent\textbf{Positioning.}
Existing efficient-SC work treats path-budget reduction as either online stopping by vote agreement or quality (Adaptive-Consistency~\cite{aggarwal2023adaptive}, ESC~\cite{li2024esc}, RASC~\cite{wan2025rasc}), confidence-weighted aggregation (CISC~\cite{cisc2025}, \cite{entropyvoting2025,optimalagg2025}), or compound-inference scaling-law fitting (\cite{chen2024morellmcalls}, large-scale repeated sampling~\cite{brown2024monkeys,snell2024scaling}). We organize these under a single \emph{upstream} diagnostic: pairwise correctness correlation $c$ limits effective diversity to $K/(1{+}(K{-}1)c)$, supplying a closed-form ceiling $1/c$ and a pilot-estimated $K^*$. Because $\hat{c}$ equals the corrected between-instance variance of per-instance accuracy divided by $\bar{p}(1-\bar{p})$, it summarizes in one vote-level statistic the difficulty heterogeneity that query-difficulty scaling models fit as a mixture. Table~\ref{tab:comparison} (Appendix~\ref{app:comparison}) summarizes the per-method differences.

\section{System Model}
\label{sec:model}

A summary of notation used throughout the paper is provided in Table~\ref{tab:notation} (Appendix~\ref{app:notation}).

\subsection{The Reasoning Channel}

We use a communications-inspired \emph{effective} model as an analytically tractable abstraction, not as a claim about the internal mechanism of language generation.
Reasoning paths generated by the same model under the same prompt share that model's competence on each problem, so their correctness is correlated across problems even when the paths for a given problem are sampled independently.
In this view, the ground-truth answer is the latent target, each reasoning path is a branch observation, and the final aggregator is a receiver-side combiner.
The value of the model lies in the predictions it enables (saturation law, majority-vote calibration) and the qualitative account it provides of answer-space effects, which we validate empirically in \S\ref{sec:experiments}.
The saturation law, the beta-binomial prediction and Adaptive-K use observable correctness alone. Among the theoretical results only the latent-rank part of Theorem~\ref{thm:keff} uses \eqref{eq:channel}, and the GLS results use the separate linear model of \S\ref{sec:gls} (Appendix~\ref{app:assumption_map}).

Consider an LLM generating $K$ reasoning paths for a problem with ground-truth answer $x \in \mathcal{A}$, where $\mathcal{A}$ is a finite answer space.
Let $\mathbf{z}(x) \in \mathbb{R}^D$ denote the latent embedding of the correct answer.
Each reasoning path $k$ produces a latent representation:
\begin{equation}
\label{eq:channel}
\mathbf{z}_k = h_k \, \mathbf{z}(x) + \mathbf{n}_k, \quad k = 1, \ldots, K,
\end{equation}
where $h_k > 0$ is a real-valued channel gain capturing the reasoning quality of path $k$, and $\mathbf{n}_k \sim \mathcal{N}(\mathbf{0}, \sigma_n^2 \mathbf{I}_D)$ represents reasoning noise (hallucination, arithmetic errors).

The final answer is decoded from a combined representation $\hat{\mathbf{z}}$ as $\hat{x} = \argmin_{a \in \mathcal{A}} \| \hat{\mathbf{z}} - \mathbf{z}(a) \|^2$.
This decoding step is part of the analytical abstraction only, and no latent-embedding decoding is performed. The implemented pipeline extracts discrete answers from generated text. Deployment returns the plurality vote over these answers (Algorithm~\ref{alg:adaptive_k}), and the reported accuracies are the binary majority vote on their gold-scored correctness (\S\ref{sec:setup_mv}). The empirical GLS-R combiner (\S\ref{sec:gls}) is the only construct defined on embeddings, and it requires hidden-state access.

\begin{assumption}[Exchangeable paths]
\label{asm:exchangeable}
Channel gains $h_1, \ldots, h_K$ are identically distributed with $\mathbb{E}[h_k] = \mu_h > 0$ and $\Var(h_k) = \sigma_h^2$.
Noise vectors $\mathbf{n}_k$ are i.i.d.\ across paths and independent of $h_k$.
\end{assumption}

This assumption is more likely to hold when all $K$ paths are generated by the same model with the same prompt and sampling temperature.

\begin{remark}[Evaluation-induced binary collapse]
\label{rem:binary_collapse}
For open-form tasks (HotpotQA, TriviaQA, DROP), we operate on the binary-collapsed correctness indicator $Y_k = \mathbf{1}\{a_k = x\}$, reducing any task to $\mathcal{A} = \{0,1\}$ so that the majority-vote (Theorem~\ref{thm:mv}) and beta-binomial \eqref{eq:mv_corr} results apply regardless of the original output format. Scope, thresholding, and GLS exclusion are discussed in Appendix~\ref{app:binary_collapse}.
\end{remark}

\subsection{Path Correlation}
\label{sec:path_corr}

Since all paths originate from the same model and prompt, the channel gains $h_1, \ldots, h_K$ are correlated.
We model this correlation through an equally-correlated structure for the \emph{path correlation matrix} $\mathbf{R} \in \mathbb{R}^{K\times K}$:
\begin{equation}
\label{eq:corr_matrix}
R_{ij} = \begin{cases} \sigma_h^2 & i = j, \\ \sigma_h^2 \rho & i \neq j, \end{cases}
\end{equation}
where $\rho \in [0, 1]$ is the pairwise path correlation coefficient. 

\smallskip\noindent\textbf{Estimating path correlation from data.}
The latent correlation $\rho$ is not directly observable.
We estimate path dependence through the \emph{correctness correlation}
\begin{equation}
\label{eq:corr_c}
\hat{c}_{jk} = \frac{\frac{1}{n}\sum_{i=1}^{n} Y_i^{(j)} Y_i^{(k)} - \bar{p}^2}{\bar{p}(1 - \bar{p})},
\end{equation}
where $Y_i^{(k)} = \mathbf{1}\{a_i^{(k)} = x_i\}$ is the correctness indicator for path $k$ on instance $i$, and $\bar{p} = \frac{1}{nK}\sum_{i,k} Y_i^{(k)}$ is the empirical mean accuracy.
This is the natural estimator for the vote-level correlation that directly governs majority-vote behavior through $\Keff^{\text{vote}} = K / (1 + (K{-}1)c)$.
In all experiments, we report $\hat{c} = \binom{K}{2}^{-1} \sum_{j<k} \hat{c}_{jk}$, averaged over all path pairs.
Pooled over instances, $\hat{c}$ equals the finite-$K$-corrected between-instance variance of per-instance accuracy divided by $\bar{p}(1-\bar{p})$.
In identically prompted SC the $K$ paths of an instance are sampled independently, so $\hat{c}$ is a marginal intraclass correlation, $\Keff^{\text{vote}}$ is the corresponding design-effect effective sample size, and $1/c$ is its asymptotic ceiling.
Under prompt templates the $K$ slots follow different distributions, and we use that setting to test whether $\hat{c}$ moves with the sampling configuration.
We use $\hat{c}$ rather than alternatives such as Cohen's kappa because $\hat{c}$ is the quantity that enters the effective sample size formula (see Appendix~\ref{app:kappa_comparison} for comparison).

\begin{assumption}[Gaussian decision surrogate]
\label{asm:gaussian_decision}
For analytical tractability, we model the binary correctness indicators $Y_1, \ldots, Y_K$ as arising from thresholding jointly Gaussian latent decision scores $M_1, \ldots, M_K$:
$Y_k = \mathbf{1}\{M_k > 0\}$, with $\Pr(M_k > 0) = p$ and $\Corr(M_j, M_k) = \rho_d$.
In the binary case $|\mathcal{A}| = 2$, the decision margin conditioned on gains is a linear function of the Gaussian observation $\mathbf{z}_k$, so $\rho_d$ coincides with the gain correlation $\rho$ when gains are jointly Gaussian.
For $|\mathcal{A}| > 2$, the nearest-neighbor decision boundary is polyhedral and $\rho_d$ depends on both $\rho$ and the answer-space geometry; we treat $\rho_d$ as a surrogate parameter calibrated through the observable $\hat{c}$.
\end{assumption}

\subsection{Two Notions of Effective Diversity}

We distinguish two notions that must not be conflated.

\smallskip\noindent\textbf{Latent covariance effective rank.}
The participation ratio of the gain covariance matrix $\mathbf{R}$ measures the effective rank of the gain structure:
\begin{equation}
\label{eq:keff_rank}
\Keff^{\text{rank}}(\mathbf{R}) = \frac{(\tr \mathbf{R})^2}{\tr(\mathbf{R}^2)}.
\end{equation}
The participation ratio is scale-invariant, so $\Keff^{\text{rank}}$ depends only on $\rho$, not on $\sigma_h^2$.

\smallskip\noindent\textbf{Vote-level effective sample size.}
For binary correctness indicators $Y_k = \mathbf{1}\{a_k = x\}$ with pairwise correlation $c = \Corr(Y_i, Y_j)$, the standard design-effect formula gives:
\begin{equation}
\label{eq:keff_vote}
\Keff^{\text{vote}} = \frac{K}{1 + (K{-}1)c}.
\end{equation}

The latent effective rank characterizes covariance geometry; the vote-level effective sample size is the more direct object for predicting majority-vote accuracy.

\begin{proposition}[Bridge between latent and vote-level diversity]
\label{prop:bridge}
Both effective diversity measures share the form $K / (1 + (K{-}1)\gamma)$ with $\gamma = \rho^2$ (latent rank) or $\gamma = c$ (vote level), where $\gamma \mapsto K/(1+(K{-}1)\gamma)$ is monotone decreasing.
Under the Gaussian decision surrogate (Assumption~\ref{asm:gaussian_decision}) with $p \in (0,1)$, the binary correctness correlation $c = \Corr(Y_j, Y_k)$ is a strictly monotone increasing function of the decision-layer correlation $\rho_d$ (proof in Appendix~\ref{app:bridge_proof}).
The relationship between $c$ and $\rho^2$ depends on both $p$ and $\rho_d$; the two quantities are not directly comparable in general.
\emph{Assumption used:} Assumption~\ref{asm:gaussian_decision} only.
\end{proposition}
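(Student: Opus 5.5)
The proposition has three parts, and I will handle them in order of difficulty.

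\emph{Shared functional form.} For the equicorrelated matrix \eqref{eq:corr_matrix} we have $\tr\mathbf{R} = K\sigma_h^2$. For the denominator, $\tr(\mathbf{R}^2) = \sum_{i,j} R_{ij}^2 = K\sigma_h^4 + K(K{-}1)\sigma_h^4\rho^2$. Substituting into \eqref{eq:keff_rank} gives
\[
\Keff^{\text{rank}} = \frac{K^2\sigma_h^4}{K\sigma_h^4\bigl(1+(K{-}1)\rho^2\bigr)} = \frac{K}{1+(K{-}1)\rho^2},
\]
so $\gamma=\rho^2$. Comparing with \eqref{eq:keff_vote} gives $\gamma=c$ at the vote level. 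Monotonicity is immediate: for $K\ge 2$ the denominator $1+(K{-}1)\gamma$ is positive and strictly increasing in $\gamma$ on $[0,1]$.

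\emph{Monotonicity of $c$ in $\rho_d$.} This is the substantive step. Standardize the scores as $M_k = \mu + \tilde M_k$, where $\tilde M_k$ is standard normal and $\mu = \Phi^{-1}(p)$, so that $\Pr(M_k>0)=p$. Then $Y_k = \mathbf{1}\{\tilde M_k > -\mu\}$, and
\[
c = \frac{\Pr(Y_j=Y_k=1) - p^2}{p(1-p)} = \frac{\Phi_2(\mu,\mu;\rho_d) - p^2}{p(1-p)},
\]
where $\Phi_2$ is the standard bivariate normal CDF. I use the symmetry $(\tilde M_j,\tilde M_k)\mapsto(-\tilde M_j,-\tilde M_k)$ to write $\Pr(\tilde M_j>-\mu,\tilde M_k>-\mu)=\Phi_2(\mu,\mu;\rho_d)$. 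The denominator $p(1-p)$ is a positive constant independent of $\rho_d$ because $p\in(0,1)$. It therefore suffices to show that $\rho\mapsto\Phi_2(\mu,\mu;\rho)$ is strictly increasing. For this I invoke Plackett's identity,
\[
\frac{\partial \Phi_2(a,b;\rho)}{\partial\rho} = \phi_2(a,b;\rho),
\]
the bivariate normal density, which is strictly positive for $|\rho|<1$. I would derive it, rather than merely cite it, from the heat-equation property $\partial_\rho \phi_2 = \partial_a\partial_b \phi_2$ and integrate twice. Continuity extends strict monotonicity to the closed interval including the endpoints. As a sanity check, $\rho_d=0$ gives $c=0$ and $\rho_d=1$ gives $\Phi(\mu)=p$, hence $c=1$.

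\emph{Non-comparability of $c$ and $\rho^2$.} Here I will exhibit the dependence rather than prove a negative. Because $c$ depends on $p$ through $\mu$ while $\rho^2$ does not, two values of $p$ give different $c$ at a fixed $\rho_d=\rho$. For example, $p=1/2$ gives the closed form $c=\tfrac{2}{\pi}\arcsin\rho_d$, which exceeds $\rho_d^2$ for $\rho_d\in(0,1)$. In contrast, as $p\to 0$ or $1$, tail dependence yields $c\to 0$ for any fixed $\rho_d<1$, which falls below $\rho_d^2$. Hence neither $c\le\rho^2$ nor $c\ge\rho^2$ holds uniformly, and the ordering depends on $(p,\rho_d)$.

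The main obstacle is justifying Plackett's identity cleanly and handling the endpoint $\rho_d=1$, where the density degenerates. I plan to handle the endpoint by a continuity limit.
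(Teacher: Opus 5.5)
Your argument for the monotonicity claim follows the paper's route. Like the paper, you write $\Pr(Y_j=Y_k=1)=\Phi_2(\Phi^{-1}(p),\Phi^{-1}(p);\rho_d)$, observe that $p^2$ and $p(1-p)$ do not depend on $\rho_d$, and use the fact that $\Phi_2$ is increasing in its correlation argument. The difference is that the paper only cites this last fact. You derive it from Plackett's identity, which is more self-contained and also gives the explicit derivative $\partial c/\partial\rho_d=\phi_2(\mu,\mu;\rho_d)/(p(1-p))>0$. Your Frobenius-norm computation of $\tr(\mathbf{R}^2)$ replaces the paper's eigenvalue computation in the proof of Theorem~\ref{thm:keff}, and it is fine. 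The paper gives no argument for the non-comparability sentence, so that part of your proposal is extra.

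One claim in that extra part is false. At $p=1/2$, $c=\tfrac{2}{\pi}\arcsin\rho_d$ does not exceed $\rho_d^2$ on all of $(0,1)$. Substitute $\rho_d=\sin\theta$ and $u=2\theta$. Then $c-\rho_d^2=u/\pi-(1-\cos u)/2$. This is concave on $(0,\pi/2)$ and convex on $(\pi/2,\pi)$, and it vanishes at $u=0,\pi/2,\pi$. So the curves cross exactly at $\rho_d=1/\sqrt2$, where $c=\tfrac{2}{\pi}\cdot\tfrac{\pi}{4}=\tfrac12=\rho_d^2$, and $c<\rho_d^2$ on $(1/\sqrt2,1)$. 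For example, $\rho_d=0.9$ gives $c\approx 0.713<0.81$. Near $1$ you can see why: $1-c=\tfrac{2}{\pi}\arccos\rho_d\sim\tfrac{2}{\pi}\sqrt{2(1-\rho_d)}$, while $1-\rho_d^2\sim 2(1-\rho_d)$. The conclusion survives if you state the example at a specific value such as $\rho_d=1/2$, which gives $c=1/3>1/4$. In fact the corrected $p=1/2$ curve already shows both orderings, so the $p\to0$ tail-independence limit is not needed. It remains a useful illustration that the comparison also depends on $p$.
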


\smallskip\noindent\textbf{Practical consequence.}
Since the latent $\rho$ is unobservable, all experiments use $\Keff^{\text{vote}} = K/(1+(K{-}1)\hat{c})$ with the directly measured $\hat{c}$; Proposition~\ref{prop:bridge} ensures that higher latent dependence produces higher vote-level dependence, so the two effective-diversity measures move in the same direction (design-effect derivation~\cite{kish1965} in Appendix~\ref{app:design_effect}).

\section{Theoretical Analysis}
\label{sec:theory}

\subsection{Effective Diversity Order}

\begin{theorem}[Effective diversity order]
\label{thm:keff}
For $K$ equally-correlated paths with correlation coefficient $\rho$ as in \eqref{eq:corr_matrix},
\begin{equation}
\label{eq:keff_formula}
\Keff^{\text{rank}} = \frac{K}{1 + (K{-}1)\rho^2}, \qquad \Keff^{\text{rank}} \to \frac{1}{\rho^2} \text{ as } K \to \infty.
\end{equation}
The vote-level analog $\Keff^{\text{vote}} = K/(1+(K{-}1)c)$ saturates at $1/c$. Proof via eigenvalue decomposition of $\mathbf{R}$ is in Appendix~\ref{app:keff_proof}.
\emph{Assumptions used:} for $\Keff^{\text{rank}}$, \eqref{eq:channel} with Assumption~\ref{asm:exchangeable} and the equicorrelated matrix \eqref{eq:corr_matrix}. For $\Keff^{\text{vote}}$, equicorrelated correctness indicators.
\end{theorem}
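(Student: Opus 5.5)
The plan is to compute the participation ratio \eqref{eq:keff_rank} from the spectrum of $\mathbf{R}$, and then obtain the vote-level statement from the variance of a sum of equicorrelated indicators.

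\emph{Spectrum of $\mathbf{R}$.} I would write \eqref{eq:corr_matrix} as
\[
\mathbf{R} = \sigma_h^2\bigl((1-\rho)\mathbf{I}_K + \rho\,\mathbf{1}\mathbf{1}^\top\bigr).
\]
The all-ones vector $\mathbf{1}$ is an eigenvector with eigenvalue $\sigma_h^2(1+(K{-}1)\rho)$. Any vector orthogonal to $\mathbf{1}$ (a $(K{-}1)$-dimensional subspace) is an eigenvector with eigenvalue $\sigma_h^2(1-\rho)$. This gives
\begin{align*}
\tr\mathbf{R} &= K\sigma_h^2,\\
\tr(\mathbf{R}^2) &= \sigma_h^4\bigl[(1+(K{-}1)\rho)^2 + (K{-}1)(1-\rho)^2\bigr].
\end{align*}
As a check, $\tr(\mathbf{R}^2)$ can also be computed directly as the sum of squared entries, $\sigma_h^4\bigl(K + K(K{-}1)\rho^2\bigr)$.

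\emph{Participation ratio.} Expanding the bracket, the cross terms in $\rho$ cancel:
\[
1 + 2(K{-}1)\rho + (K{-}1)^2\rho^2 + (K{-}1) - 2(K{-}1)\rho + (K{-}1)\rho^2 = K\bigl(1+(K{-}1)\rho^2\bigr).
\]
Substituting into \eqref{eq:keff_rank}, the factor $\sigma_h^4$ cancels, which confirms scale invariance, and
\[
\Keff^{\text{rank}} = \frac{K^2}{K\bigl(1+(K{-}1)\rho^2\bigr)} = \frac{K}{1+(K{-}1)\rho^2}.
\]
For the limit with $\rho>0$, I would divide numerator and denominator by $K$ to get $1/\bigl(1/K + (1-1/K)\rho^2\bigr) \to 1/\rho^2$. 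If $\rho=0$, then $\Keff^{\text{rank}}=K$ and the ceiling is infinite, consistent with reading $1/0=\infty$. I would also note monotonicity in $K$, since the derivative has the sign of $1-\rho^2 \ge 0$.

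\emph{Vote level.} Here the only input is that $Y_1,\dots,Y_K$ have common variance $p(1-p)$ and pairwise correlation $c$. Then
\[
\Var\Bigl(\sum_k Y_k\Bigr) = Kp(1-p)\bigl[1+(K{-}1)c\bigr].
\]
Matching this to $p(1-p)$ times the number of independent indicators that would give the same variance of the mean yields $\Keff^{\text{vote}} = K/(1+(K{-}1)c)$; this is the Kish design effect of Appendix~\ref{app:design_effect}. The limit $1/c$ then follows by the same division as above, now with $c$ in place of $\rho^2$.

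\emph{Main obstacle.} The algebra is routine; the delicate part is the parameter range. The eigenvalues $1-\rho$ and $1+(K{-}1)c$ must be nonnegative, so $c \ge -1/(K{-}1)$ is needed for validity. Since the limit statement requires $c>0$, I would restrict to $c\in(0,1]$ and state explicitly that the ceiling is attained only asymptotically.
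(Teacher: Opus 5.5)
Your proposal is correct and follows essentially the paper's route: the same eigendecomposition of $\mathbf{R}$ into eigenvalues $\sigma_h^2(1+(K{-}1)\rho)$ and $\sigma_h^2(1-\rho)$ and the same expansion of $\tr(\mathbf{R}^2)$ to $\sigma_h^4 K(1+(K{-}1)\rho^2)$ as in Appendix~\ref{app:keff_proof}, and for the vote level the same Kish design-effect variance computation as in Appendix~\ref{app:design_effect}. Your additions go beyond the paper's proof: the sum-of-squared-entries check, the $\rho=0$ and $c\le 0$ edge cases, and the validity range $c\ge -1/(K{-}1)$. One small slip: the eigenvalues of the indicator correlation matrix are $1-c$ and $1+(K{-}1)c$, not $1-\rho$ and $1+(K{-}1)c$.
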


\subsection{Majority Vote Accuracy}

We analyze majority vote through a binary-collapsed correctness process.
Let $Y_k = \mathbf{1}\{a_k = x\}$ where $a_k$ is the extracted answer from path $k$.

\begin{theorem}[Majority vote under independence]
\label{thm:mv}
If $Y_1, \ldots, Y_K$ are i.i.d.\ $\text{Bernoulli}(p)$, for odd $K$:
\begin{equation}
P_{\text{MV}}(K, p) = \sum_{j=(K+1)/2}^{K} \binom{K}{j} p^j (1{-}p)^{K-j}.
\end{equation}
For even $K$ with random tie-breaking, the tie term $\binom{K}{K/2} p^{K/2}(1{-}p)^{K/2}$ contributes a factor of $1/2$.
\emph{Assumption used:} i.i.d.\ correctness indicators (the independence reference).
\end{theorem}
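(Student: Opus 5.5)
The plan is to reduce majority vote to a statement about the number of correct paths, $S_K = \sum_{k=1}^K Y_k$. Because the $Y_k$ are i.i.d.\ $\text{Bernoulli}(p)$, $S_K \sim \text{Binomial}(K,p)$. To see this, I will partition the event $\{S_K = j\}$ according to which subset of $j$ indices equals $1$. Independence then gives probability $p^j(1-p)^{K-j}$ for each fixed subset, and there are $\binom{K}{j}$ such subsets. This yields $\Pr(S_K=j)=\binom{K}{j}p^j(1-p)^{K-j}$.

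Next I fix the decision rule under the binary collapse of Remark~\ref{rem:binary_collapse}. The vote is correct exactly when the correct answer receives strictly more than half of the $K$ votes, that is, when $S_K > K - S_K$.

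For odd $K$ a tie $S_K = K/2$ is impossible. So the event reduces to $S_K \ge (K+1)/2$, and summing the binomial pmf over $j=(K+1)/2,\dots,K$ gives the displayed formula. The events $\{S_K=j\}$ are disjoint, so the sum is exact.

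For even $K$ the outcome splits into three disjoint events: $S_K > K/2$ (correct), $S_K < K/2$ (incorrect), and the tie $S_K = K/2$. Random tie-breaking is taken to be a fair coin independent of the $Y_k$. By the law of total probability, the tie event contributes $\tfrac12\binom{K}{K/2}p^{K/2}(1-p)^{K/2}$. The full expression is therefore
\begin{equation*}
P_{\text{MV}}(K,p)=\sum_{j=K/2+1}^{K}\binom{K}{j}p^j(1-p)^{K-j}+\tfrac12\binom{K}{K/2}p^{K/2}(1-p)^{K/2}.
\end{equation*}

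There is no real obstacle here. The only point needing care is modeling. In the original multi-class setting, plurality vote is not literally ``more than half correct,'' since incorrect answers may split among themselves. The theorem, however, is stated for the binary-collapsed indicators, where $\mathcal{A}=\{0,1\}$, so majority and plurality coincide. I will state this reduction explicitly and note that, in the multi-class case, the formula is a lower bound on plurality accuracy.
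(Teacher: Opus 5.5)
Your proof is correct and is the standard binomial-tail argument the paper relies on; the paper gives no separate proof and only notes that the formula is exact for the binary-collapsed model. Your closing aside, that the formula lower-bounds multiclass plurality accuracy, is not needed for the theorem, and the paper treats that bound only as a heuristic, calling a formal proof an open problem. If you keep the aside, justify it with three facts:
\begin{enumerate}
\item $S_K \sim \text{Binomial}(K,p)$ however the wrong answers fragment.
\item The containment $\{S_K > K/2\} \subseteq \{\text{plurality picks } x\}$ holds.
\item At $S_K = K/2$, the correct answer wins with conditional probability at least $1/2$.
\end{enumerate}
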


This formula is exact for the binary-collapsed model.
In the original multiclass answer space with $|\mathcal{A}| > 2$, plurality self-consistency depends on the full wrong-answer distribution and the binomial formula is a surrogate.

\begin{remark}[Multiclass plurality heuristic]
\label{rem:multiclass}
Let $|\mathcal{A}| = M > 2$ and let $p$ denote the probability that a single path produces the correct answer.
Under uniform wrong-answer fragmentation among $M{-}1$ alternatives, the effective pairwise accuracy for correct-vs-most-popular-wrong is $p' = p(M{-}1) / (pM - 2p + 1)$.
Since $p' > p$ for $M > 2$, the binary-collapsed model provides a heuristically conservative bound on multiclass plurality accuracy.
For $M{=}4$ and $p{=}0.5$, $p'{=}0.75$ (see Appendix~\ref{app:multiclass_proof} for the derivation).
A formal proof of $P_{\text{PV}}^{(M)}(K, p) \ge P_{\text{MV}}^{(2)}(K, p)$ via multinomial coupling is an open problem.
\end{remark}

Non-uniform wrong-answer fragmentation (where some wrong answers are more popular) reduces the plurality advantage.

\smallskip\noindent\textbf{Correlated majority vote.}
When paths are correlated, we model the joint correctness distribution using the classical beta-binomial (BB) model~\cite{skellam1948}.
Assume $\Theta \sim \text{Beta}(\alpha, \beta)$ and $Y_k \mid \Theta \overset{\text{i.i.d.}}{\sim} \text{Bernoulli}(\Theta)$.
Then $Y_k$ are exchangeable with:
\begin{equation}
p = \frac{\alpha}{\alpha + \beta}, \quad c = \frac{1}{\alpha + \beta + 1}.
\end{equation}
The count $S_K = \sum_k Y_k$ follows a beta-binomial distribution, and the correlated majority-vote accuracy is:
\begin{equation}
\label{eq:mv_corr}
P_{\text{MV}}^{\text{corr}} = \sum_{j > K/2} \binom{K}{j} \frac{B(j{+}\alpha, K{-}j{+}\beta)}{B(\alpha, \beta)},
\end{equation}
where $B(\cdot, \cdot)$ is the beta function.
For even $K$ with random tie-breaking, the $j{=}K/2$ term contributes half its probability mass: add $\frac{1}{2}\binom{K}{K/2} B(K/2{+}\alpha,\, K/2{+}\beta) / B(\alpha, \beta)$ to \eqref{eq:mv_corr}.

\subsection{GLS-Optimal Combining}
\label{sec:gls}

Equal-weight combining ignores cross-path redundancy.
We now analyze a generalized linear aggregation model that shares the branch-combining structure of \S\ref{sec:model} but is not a direct corollary of Eq.~\eqref{eq:channel}.
The GLS results (Theorem~\ref{thm:gls}, Corollaries~\ref{cor:special}--\ref{cor:gain}) are the classical generalized least squares solution~\cite{aitken1935} and hold for any linear estimation model satisfying $\mathbf{e}_k = g_k \mathbf{s} + \boldsymbol{\varepsilon}_k$, independent of the channel model. Assume each path embedding follows this model for $k=1,\ldots,K$, where $\mathbf{s}\in\mathbb{R}^D$ is the latent answer embedding, $g_k>0$ is a path-quality gain, and for each feature dimension $d$, the branch error vector $\boldsymbol{\varepsilon}^{(d)}=(\varepsilon_{1d},\ldots,\varepsilon_{Kd})^\top$ has mean zero and covariance $\boldsymbol{\Sigma}\in\mathbb{R}^{K\times K}$.

For a linear estimator $\hat{\mathbf{s}}(\mathbf{w}) = \sum_k w_k \mathbf{e}_k$, unbiasedness requires $\mathbf{g}^\top \mathbf{w} = 1$.
The risk is $\mathbb{E}\|\hat{\mathbf{s}} - \mathbf{s}\|^2 = D \cdot \mathbf{w}^\top \boldsymbol{\Sigma} \mathbf{w}$.

\begin{theorem}[GLS-optimal combiner]
\label{thm:gls}
Let $\boldsymbol{\Sigma}$ be positive definite.
Among all linear unbiased estimators satisfying $\mathbf{g}^\top \mathbf{w} = 1$, the minimum-MSE weights are:
\begin{multline}
\label{eq:gls_weights}
\mathbf{w}_* = \frac{\boldsymbol{\Sigma}^{-1} \mathbf{g}}{\mathbf{g}^\top \boldsymbol{\Sigma}^{-1} \mathbf{g}}, \\ \text{with optimal risk}  \MSE(\mathbf{w}_*) = D / (\mathbf{g}^\top \boldsymbol{\Sigma}^{-1} \mathbf{g}).
\end{multline}
\emph{Assumption used:} the linear embedding model of \S\ref{sec:gls}, independent of \eqref{eq:channel}.
\end{theorem}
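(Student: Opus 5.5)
The plan is the standard Aitken/Gauss--Markov argument: a quadratic minimization under one linear equality constraint, solved by a change of variables and Cauchy--Schwarz. A Lagrange-multiplier derivation would also work.

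First, I would confirm the risk expression used in the statement. Write $\hat{\mathbf{s}}(\mathbf{w}) - \mathbf{s} = (\mathbf{g}^\top\mathbf{w} - 1)\mathbf{s} + \sum_k w_k \boldsymbol{\varepsilon}_k$. Under $\mathbf{g}^\top\mathbf{w}=1$ the first term vanishes. Coordinatewise, the error in dimension $d$ is $\mathbf{w}^\top\boldsymbol{\varepsilon}^{(d)}$, which has mean zero and variance $\mathbf{w}^\top\boldsymbol{\Sigma}\mathbf{w}$. Summing over the $D$ dimensions gives $\mathbb{E}\|\hat{\mathbf{s}}-\mathbf{s}\|^2 = D\,\mathbf{w}^\top\boldsymbol{\Sigma}\mathbf{w}$. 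Note that only the per-dimension covariance is needed here; cross-dimension dependence plays no role. The problem therefore reduces to minimizing $\mathbf{w}^\top\boldsymbol{\Sigma}\mathbf{w}$ subject to $\mathbf{g}^\top\mathbf{w}=1$.

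Second, I would use positive definiteness of $\boldsymbol{\Sigma}$ to whiten. Set $\mathbf{u}=\boldsymbol{\Sigma}^{1/2}\mathbf{w}$ and $\mathbf{v}=\boldsymbol{\Sigma}^{-1/2}\mathbf{g}$. The constraint becomes $\mathbf{v}^\top\mathbf{u}=1$, and the objective becomes $\|\mathbf{u}\|^2$. By Cauchy--Schwarz, $1 = (\mathbf{v}^\top\mathbf{u})^2 \le \|\mathbf{v}\|^2\|\mathbf{u}\|^2$. Hence $\mathbf{w}^\top\boldsymbol{\Sigma}\mathbf{w} \ge 1/(\mathbf{g}^\top\boldsymbol{\Sigma}^{-1}\mathbf{g})$. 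The denominator is strictly positive because $\boldsymbol{\Sigma}^{-1}$ is positive definite and $\mathbf{g}\neq\mathbf{0}$ (indeed $g_k>0$). Equality holds if and only if $\mathbf{u}$ is parallel to $\mathbf{v}$. Imposing the constraint fixes $\mathbf{u}=\mathbf{v}/\|\mathbf{v}\|^2$, that is, $\mathbf{w}_*=\boldsymbol{\Sigma}^{-1}\mathbf{g}/(\mathbf{g}^\top\boldsymbol{\Sigma}^{-1}\mathbf{g})$. Two checks remain: $\mathbf{g}^\top\mathbf{w}_*=1$ holds directly, and substituting gives $\mathbf{w}_*^\top\boldsymbol{\Sigma}\mathbf{w}_* = 1/(\mathbf{g}^\top\boldsymbol{\Sigma}^{-1}\mathbf{g})$. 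Multiplying by $D$ yields the stated MSE.

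The only delicate point is uniqueness and existence, and positive definiteness settles both. The objective is strictly convex and the feasible set is a nonempty affine hyperplane, so the minimizer exists and is unique. The equality case of Cauchy--Schwarz identifies it. Everything else is routine algebra, so I do not expect a real obstacle.
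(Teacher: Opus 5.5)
Your proof is correct, and it takes a different route from the paper's. The paper forms the Lagrangian $\mathbf{w}^\top\boldsymbol{\Sigma}\mathbf{w} + \nu(\mathbf{g}^\top\mathbf{w}-1)$ and solves the stationarity condition $2\boldsymbol{\Sigma}\mathbf{w} + \nu\mathbf{g} = \mathbf{0}$ to get $\mathbf{w}\propto\boldsymbol{\Sigma}^{-1}\mathbf{g}$. It then normalizes with the constraint. The paper also takes the risk identity $\mathbb{E}\|\hat{\mathbf{s}}-\mathbf{s}\|^2 = D\,\mathbf{w}^\top\boldsymbol{\Sigma}\mathbf{w}$ as given from the main text. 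It leaves two things implicit: why the critical point is a global minimum (strict convexity from positive definiteness), and the substitution that yields the optimal risk.

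Your whitening plus Cauchy--Schwarz argument instead gives a lower bound that holds for every feasible $\mathbf{w}$. So existence, global optimality, uniqueness, and the value $D/(\mathbf{g}^\top\boldsymbol{\Sigma}^{-1}\mathbf{g})$ all follow from one inequality and its equality case. You also derive the risk identity from the model rather than assuming it.

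Your equality condition $\boldsymbol{\Sigma}^{1/2}\mathbf{w}\parallel\boldsymbol{\Sigma}^{-1/2}\mathbf{g}$, i.e.\ $\boldsymbol{\Sigma}\mathbf{w}\propto\mathbf{g}$, has a further use. It is exactly the criterion the paper later invokes through KKT in the proof of Corollary~\ref{cor:gain}, so your argument covers that equality case as well.

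The Lagrangian route's advantage is mechanical generality. It extends verbatim to several linear constraints (e.g.\ $\mathbf{A}^\top\mathbf{w}=\mathbf{b}$), where a single Cauchy--Schwarz step no longer suffices and you would need a projection argument in the whitened coordinates.
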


The Lagrangian derivation and strict-improvement conditions are given in Appendix~\ref{app:gls_proof}.

\begin{corollary}[Symmetric case reduces to uniform weighting]
\label{cor:special}
Under the fully symmetric model $\boldsymbol{\Sigma} = \sigma^2 [(1{-}\rho)\mathbf{I} + \rho \mathbf{1}\mathbf{1}^\top]$ with $\mathbf{g} = \mathbf{1}$, $\boldsymbol{\Sigma}^{-1} \mathbf{1} \propto \mathbf{1}$, so $\mathbf{w}_* = (1/K)\mathbf{1}$: no path is distinguished. Other special cases (white noise $\Rightarrow$ Maximum Ratio Combining (MRC); equal gains $\Rightarrow$ covariance-aware averaging) are given in Appendix~\ref{app:gls_proof}.
\emph{Assumption used:} as Theorem~\ref{thm:gls}, with symmetric $\boldsymbol{\Sigma}$ and equal gains.
\end{corollary}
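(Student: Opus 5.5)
The plan is to apply Theorem~\ref{thm:gls} directly. Under the stated symmetric model it suffices to show that $\mathbf{1}$ is an eigenvector of $\boldsymbol{\Sigma}$. Once that holds, $\boldsymbol{\Sigma}^{-1}\mathbf{1}\propto\mathbf{1}$, and the normalization in \eqref{eq:gls_weights} forces $\mathbf{w}_*=(1/K)\mathbf{1}$.

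\textbf{Step 1: positive definiteness.} First I verify that Theorem~\ref{thm:gls} applies. The matrix $\boldsymbol{\Sigma}=\sigma^2[(1-\rho)\mathbf{I}+\rho\mathbf{1}\mathbf{1}^\top]$ has the following spectrum.
\begin{itemize}
\item On $\mathbf{1}^\perp$ the eigenvalue is $\sigma^2(1-\rho)$, with multiplicity $K-1$.
\item On $\mathbf{1}$ the eigenvalue is $\sigma^2(1+(K-1)\rho)$, since $\mathbf{1}\mathbf{1}^\top\mathbf{1}=K\mathbf{1}$.
\end{itemize}
This is the same eigendecomposition used for Theorem~\ref{thm:keff}. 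Positive definiteness therefore holds exactly when $\sigma^2>0$ and $-1/(K-1)<\rho<1$. I will state this range as the implicit requirement; it covers $\rho\in[0,1)$ from \eqref{eq:corr_matrix}. The degenerate case $\rho=1$ falls outside Theorem~\ref{thm:gls}, and I will flag it.

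\textbf{Step 2: the weights.} Since $\boldsymbol{\Sigma}\mathbf{1}=\lambda_1\mathbf{1}$ with $\lambda_1=\sigma^2(1+(K-1)\rho)>0$, it follows that $\boldsymbol{\Sigma}^{-1}\mathbf{1}=\lambda_1^{-1}\mathbf{1}$. With $\mathbf{g}=\mathbf{1}$, the denominator is $\mathbf{1}^\top\boldsymbol{\Sigma}^{-1}\mathbf{1}=K/\lambda_1$. Hence
\[
\mathbf{w}_*=\frac{\lambda_1^{-1}\mathbf{1}}{K/\lambda_1}=\frac{1}{K}\mathbf{1}.
\]
As a by-product, $\MSE(\mathbf{w}_*)=D\sigma^2(1+(K-1)\rho)/K$. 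This makes the link to the design effect explicit, although that link is not needed for the corollary.

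\textbf{Step 3: ``no path is distinguished.''} I will read this as permutation invariance. For any permutation matrix $\mathbf{P}$, $\mathbf{P}\boldsymbol{\Sigma}\mathbf{P}^\top=\boldsymbol{\Sigma}$ and $\mathbf{P}\mathbf{1}=\mathbf{1}$. The GLS minimizer is unique because the objective is strictly convex under positive definiteness. Therefore $\mathbf{P}\mathbf{w}_*=\mathbf{w}_*$ for every $\mathbf{P}$, so $\mathbf{w}_*$ is constant across coordinates. This gives a second, symmetry-based proof that agrees with Step 2.

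The main obstacle is only bookkeeping. I need to pin down the admissible range of $\rho$ so that $\boldsymbol{\Sigma}^{-1}$ exists. The other special cases are deferred to the appendix and are not part of this claim, so they do not need to be proved here.
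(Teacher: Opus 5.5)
Your proof is correct and follows the paper's route: the paper's argument is the one-line observation, built into the corollary itself, that $\mathbf{1}$ is an eigenvector of the equicorrelated $\boldsymbol{\Sigma}$. This is the same eigendecomposition as in Appendix~\ref{app:keff_proof}, so $\boldsymbol{\Sigma}^{-1}\mathbf{1}\propto\mathbf{1}$ and the normalization in \eqref{eq:gls_weights} gives $\mathbf{w}_*=(1/K)\mathbf{1}$. Your explicit positive-definiteness range $-1/(K-1)<\rho<1$ (excluding $\rho=1$) and the permutation-invariance argument are sound additions that the paper leaves implicit.
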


\begin{remark}[GLS$\rightarrow$majority vote]
\label{rem:gls_to_mv}
Corollary~\ref{cor:special} establishes that uniform weighting is optimal among linear combiners operating on the latent embeddings.
Majority vote operates on the decoded discrete answers; the two coincide when the decoding step preserves the symmetry of the equicorrelated model, but differ in general.
We interpret the GLS result as theoretical grounding for the empirical effectiveness of majority vote, not as a claim that MV is Bayes-optimal among all aggregation rules.
More precisely, GLS minimizes mean-squared error over continuous latent estimates, whereas majority vote selects a discrete answer evaluated by classification accuracy.
These objectives align under symmetry-preserving decoding but define different optimization problems in general.
\end{remark}

\begin{corollary}[Gain over uniform averaging]
\label{cor:gain}
$\MSE(\mathbf{w}_*) \le \MSE(\mathbf{w}_{\text{unif}})$, with equality iff $\boldsymbol{\Sigma}^{-1} \mathbf{g} \propto \mathbf{1}$. Strict improvement requires heterogeneity in path variances, correlations, or quality gains (proof in Appendix~\ref{app:gain_proof}).
\emph{Assumption used:} as Theorem~\ref{thm:gls}, with equal gains $\mathbf{g} = \mathbf{1}$ so that uniform weights are feasible.
\end{corollary}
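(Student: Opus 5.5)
The plan is to compare the two risks directly. Theorem~\ref{thm:gls} gives them in closed form: $\MSE(\mathbf{w}_*) = D/(\mathbf{g}^\top\boldsymbol{\Sigma}^{-1}\mathbf{g})$, and with $\mathbf{g}=\mathbf{1}$ the uniform weights $\mathbf{w}_{\text{unif}}=\mathbf{1}/K$ satisfy $\mathbf{g}^\top\mathbf{w}=1$. Hence $\MSE(\mathbf{w}_{\text{unif}}) = D\,\mathbf{1}^\top\boldsymbol{\Sigma}\mathbf{1}/K^2$. The inequality is then equivalent to
\[
K^2 \le (\mathbf{1}^\top\boldsymbol{\Sigma}\mathbf{1})(\mathbf{1}^\top\boldsymbol{\Sigma}^{-1}\mathbf{1}).
\]

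This follows from Cauchy--Schwarz in the inner product $\langle \mathbf{u},\mathbf{v}\rangle_{\boldsymbol{\Sigma}} = \mathbf{u}^\top\boldsymbol{\Sigma}\mathbf{v}$, which is a genuine inner product because $\boldsymbol{\Sigma}$ is positive definite. Take $\mathbf{u}=\mathbf{1}$ and $\mathbf{v}=\boldsymbol{\Sigma}^{-1}\mathbf{1}$. Then $\langle\mathbf{u},\mathbf{v}\rangle_{\boldsymbol{\Sigma}} = \mathbf{1}^\top\mathbf{1} = K$, $\langle\mathbf{u},\mathbf{u}\rangle_{\boldsymbol{\Sigma}}=\mathbf{1}^\top\boldsymbol{\Sigma}\mathbf{1}$, and $\langle\mathbf{v},\mathbf{v}\rangle_{\boldsymbol{\Sigma}} = \mathbf{1}^\top\boldsymbol{\Sigma}^{-1}\mathbf{1}$, which gives exactly the display above.

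An alternative route avoids the inequality: since $\mathbf{w}_*$ minimizes the risk over the feasible affine set and $\mathbf{w}_{\text{unif}}$ lies in that set, the inequality is immediate. This route also gives the equality case through uniqueness of the minimizer, because the objective is strictly convex for positive definite $\boldsymbol{\Sigma}$. Equality then holds iff $\mathbf{w}_{\text{unif}}=\mathbf{w}_*$, i.e.\ iff $\boldsymbol{\Sigma}^{-1}\mathbf{g}\propto\mathbf{1}$. I would present both routes and use the Cauchy--Schwarz equality condition, $\boldsymbol{\Sigma}^{-1}\mathbf{1}\propto\mathbf{1}$, as a cross-check.

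The step needing the most care is the interpretive clause "strict improvement requires heterogeneity." The condition $\boldsymbol{\Sigma}^{-1}\mathbf{1}\propto\mathbf{1}$ is equivalent to $\boldsymbol{\Sigma}\mathbf{1}\propto\mathbf{1}$, meaning all row sums of $\boldsymbol{\Sigma}$ are equal. The equicorrelated model of Corollary~\ref{cor:special} satisfies this, so there is no gain over uniform weighting, consistent with that corollary. Strict gain therefore requires unequal row sums, i.e.\ heterogeneous variances or covariances.

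Quality-gain heterogeneity is outside the stated $\mathbf{g}=\mathbf{1}$ assumption. I would handle it with a remark: for general $\mathbf{g}$, the uniform weights rescaled to $\mathbf{1}/(\mathbf{g}^\top\mathbf{1})$ are feasible. The same Cauchy--Schwarz argument with $\mathbf{u}=\mathbf{1}$ and $\mathbf{v}=\boldsymbol{\Sigma}^{-1}\mathbf{g}$ gives equality iff $\boldsymbol{\Sigma}^{-1}\mathbf{g}\propto\mathbf{1}$. Unequal gains under equicorrelated $\boldsymbol{\Sigma}$ generically break this condition.

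I would also note that "heterogeneity" is necessary but not sufficient: a heterogeneous $\boldsymbol{\Sigma}$ whose row sums happen to be equal still gives no improvement. I would phrase the conclusion precisely in terms of the row-sum condition.
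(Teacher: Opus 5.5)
Your proposal is correct. Your second route is the paper's proof: $\mathbf{w}_{\text{unif}}=\mathbf{1}/K$ is feasible when $\mathbf{g}=\mathbf{1}$, so the constrained minimum cannot exceed its risk. For the equality case the paper uses KKT: $\mathbf{w}_{\text{unif}}$ attains the minimum iff $\boldsymbol{\Sigma}\mathbf{w}_{\text{unif}}\propto\mathbf{g}$, which for $\mathbf{g}=\mathbf{1}$ means all row sums of $\boldsymbol{\Sigma}$ are equal. This covers both directions, because KKT is necessary and sufficient for a convex quadratic under one affine constraint. Your argument via uniqueness of the minimizer (strict convexity) is an equivalent way to get the ``only if'' direction.

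Your Cauchy--Schwarz route is genuinely different. It never uses that $\mathbf{w}_*$ is a minimizer, only its closed-form risk $D/(\mathbf{1}^\top\boldsymbol{\Sigma}^{-1}\mathbf{1})$. It also gives the exact efficiency ratio $\MSE(\mathbf{w}_{\text{unif}})/\MSE(\mathbf{w}_*)=(\mathbf{1}^\top\boldsymbol{\Sigma}\mathbf{1})(\mathbf{1}^\top\boldsymbol{\Sigma}^{-1}\mathbf{1})/K^2$, so it measures how large the gain is, not just whether it exists. The paper's route is shorter and reads the row-sum condition directly off stationarity.

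Your general-$\mathbf{g}$ remark, using the rescaled uniform weights $\mathbf{1}/(\mathbf{g}^\top\mathbf{1})$, goes beyond the paper. It is what gives content to the stated condition $\boldsymbol{\Sigma}^{-1}\mathbf{g}\propto\mathbf{1}$ and to the ``quality gains'' clause, both of which are vacuous under the paper's own assumption $\mathbf{g}=\mathbf{1}$. You can sharpen ``generically'' there. For equicorrelated positive definite $\boldsymbol{\Sigma}$, $\boldsymbol{\Sigma}^{-1}=a\mathbf{I}+b\mathbf{1}\mathbf{1}^\top$ with $a=1/(\sigma^2(1-\rho))\neq 0$. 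Hence $\boldsymbol{\Sigma}^{-1}\mathbf{g}\propto\mathbf{1}$ iff $\mathbf{g}\propto\mathbf{1}$, so any unequal gains give a strict improvement over rescaled uniform weighting.
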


\smallskip\noindent\textbf{Empirical GLS-R.}
In practice, the branch covariance $\boldsymbol{\Sigma}$ is estimated from a calibration set.
Given $T$ problems with path embeddings, we form centered residuals $\tilde{\mathbf{E}}_t \in \mathbb{R}^{K \times D}$ and estimate $\hat{\boldsymbol{\Sigma}} = (TD)^{-1} \sum_t \tilde{\mathbf{E}}_t \tilde{\mathbf{E}}_t^\top$.
The regularized empirical GLS weights are:
\begin{equation}
\label{eq:gls_empirical}
\hat{\mathbf{w}}_\lambda = \frac{(\hat{\boldsymbol{\Sigma}} + \lambda \mathbf{I})^{-1} \hat{\mathbf{g}}}{\hat{\mathbf{g}}^\top (\hat{\boldsymbol{\Sigma}} + \lambda \mathbf{I})^{-1} \hat{\mathbf{g}}}.
\end{equation}

\section{Experiments}
\label{sec:experiments}

The experiments \emph{validate} the framework's predictions (saturation ceiling, BB calibration, uniform-weighting optimality) and \emph{characterize} how prompt-template perturbations probe the correlation structure of multi-path reasoning across 12 benchmarks spanning six domains.

\subsection{Experimental Settings}

\smallskip\noindent\textbf{Models.}
We evaluate five instruction-tuned models across three architecture families: Qwen2.5-0.5B/7B/32B-Instruct, Llama-3.1-8B-Instruct, and Mistral-7B-Instruct-v0.3. Saturation analysis (\S\ref{sec:saturation}) uses Qwen-7B, Llama-8B, Mistral-7B at $K \in \{4, 8, 16, 32\}$; cross-benchmark analysis (\S\ref{sec:cross_bench}) uses all five at $K{=}8$. A reasoning model, Qwen3.5-9B in thinking mode, is evaluated separately (Appendix~\ref{app:reasoning}).

\smallskip\noindent\textbf{Benchmarks.}
We evaluate on 12 benchmarks spanning six task domains (Math, QA, Sci/MC, Code, Commonsense, NLU); the full list with citations and the per-benchmark \emph{effective evaluated output space} (numeric, open text, bounded choice, binary, code pass/fail) is in Appendix~\ref{app:benchmarks}, with the answer-space label appearing as a column of Table~\ref{tab:full_matrix}.
Saturation experiments (\S\ref{sec:saturation}--\S\ref{sec:cross_arch}) use $n{=}100$ instances per seed on GSM8K, pooled across $5$ seeds ($500$ per cell); cross-benchmark experiments (\S\ref{sec:cross_bench}) use $n{=}50$ per seed, pooled across $5$ seeds ($250$ per cell, fewer in six cells listed in Table~\ref{tab:full_matrix}).

\smallskip\noindent\textbf{Methods and metrics.}
Saturation analysis (\S\ref{sec:saturation}--\S\ref{sec:cross_arch}) runs standard self-consistency (SC, temperature $\tau{=}0.7$) at $K \in \{4, 8, 16, 32\}$ and reports the correctness correlation $\hat{c}$ from \eqref{eq:corr_c}, the \emph{modeled} quantity entering the design-effect formula.
Cross-benchmark comparisons (\S\ref{sec:cross_bench}) contrast SC ($K{=}8$) against prompt-template SC (PT, $K{=}8$ structurally distinct templates per task; e.g., algebraic vs.\ estimation for math, chain-of-thought vs.\ extract-then-answer for QA; full set in Appendix~\ref{app:templates}), reporting $\hat{\rho}$, the \emph{measured} mean pairwise Pearson correlation of the $K$-path binary correctness vectors (under SC the two estimators coincide; under PT they may differ slightly), with $\Delta\rho = (\hat{\rho}_{\text{PT}} - \hat{\rho}_{\text{SC}}) / |\hat{\rho}_{\text{SC}}|$.
For QA and DROP, we threshold token-level F1 $\ge 0.5$ to obtain binary correctness.
Throughout, MV@$K$ is the binary majority vote on these indicators\label{sec:setup_mv}: an instance scores $1$ when more than $K/2$ of its paths are correct and $1/2$ at an exact tie, matching the decision rule of \eqref{eq:mv_corr}.

\smallskip\noindent\textbf{Implementation details.}
All experiments use Hugging Face Transformers with bfloat16 precision on NVIDIA H100 GPUs.
Maximum generation length is 2048 tokens for the cross-benchmark experiments, and 1024, 512 and 256 tokens for GSM8K, HotpotQA and BoolQ in the saturation and Adaptive-K experiments with non-reasoning models (the reasoning model of Appendix~\ref{app:reasoning} uses 16{,}384 tokens).

\subsection{Experimental Results}

\subsubsection{Diversity Saturation in Self-Consistency}
\label{sec:saturation}

We use GSM8K as the primary saturation benchmark because it is the principled stress test for the diagnostic: closed-form numeric answers admit a clean binary correctness collapse and isolate the correlation parameter $\hat{c}$ from partial-credit or open-form scoring confounds; the cross-task results in Table~\ref{tab:adaptive_k} (HotpotQA $\hat{c}{=}0.61$, BoolQ $\hat{c}{=}0.79$) further show that the saturation rule remains predictive in higher-correlation regimes spanning open-form QA and binary NLU at $K{=}\{4, 8, 16, 32\}$ for those rows; a per-$K$ saturation sweep across additional open-form benchmarks is left for future work.

Table~\ref{tab:diversity} reports the effective diversity analysis for Qwen2.5-7B-Instruct on GSM8K.
As $K$ increases from 4 to 32, the correctness correlation $\hat{c}$ remains stable near 0.59, indicating persistent inter-path correlation.
The vote-level effective diversity $\Keff^{\text{vote}}$ saturates at 1.67 at $K{=}32$, reaching 98\% of its theoretical ceiling $1/\hat{c} = 1.71$.

\begin{table}[!htbp]
\caption{Effective diversity analysis for Qwen2.5-7B on GSM8K (5 seeds $\times$ 100 instances = 500 pooled). Agree is the mean pairwise agreement $\binom{K}{2}^{-1}\sum_{j<k}\Pr(Y^{(j)}{=}Y^{(k)})$. Correctness correlation $\hat{c}$ is computed via \eqref{eq:corr_c}. $\Keff^{\text{vote}}$ is the design-effect \eqref{eq:keff_vote}; Ceiling $= 1/\hat{c}$. $\bar{p}$ is the mean per-path accuracy, reported as per-seed mean $\pm$ std over 5 seeds; it is not the majority-vote accuracy MV@$K$ of Table~\ref{tab:cross_arch}. The $K{=}1$ row is a single sampled path.}
\label{tab:diversity}
\centering
\vspace{5pt}
\footnotesize
\setlength{\tabcolsep}{2.5pt}
\begin{tabular}{cccccccc}
\toprule
$K$ & Agree & $\hat{c}$ & $\Keff^{\text{vote}}$ & Ceiling & \% Ceil. & $\bar{p}$ (\%) $\uparrow$ & Tokens $\downarrow$ \\
\midrule
1 & -- & -- & 1.0 & -- & -- & 78.0 $\pm$ 2.5 & 323 \\
4 & 0.868 & 0.604 & 1.42 & 1.66 & 86\% & 79.0 $\pm$ 1.3 & 1293 \\
8 & 0.866 & 0.593 & 1.55 & 1.69 & 92\% & 79.2 $\pm$ 0.8 & 2592 \\
16 & 0.862 & 0.579 & 1.65 & 1.73 & 96\% & 79.4 $\pm$ 0.8 & 5179 \\
32 & 0.864 & 0.586 & 1.67 & 1.71 & 98\% & 79.2 $\pm$ 0.4 & 10327 \\
\bottomrule
\end{tabular}%
\end{table}

The rapid ceiling approach confirms the design-effect prediction: $\Keff^{\text{vote}} \ll K$ and saturates quickly, explaining the diminishing accuracy returns observed beyond $K{=}8$.

\subsubsection{Cross-Architecture Validation}
\label{sec:cross_arch}

\begin{table}[!htbp]
\caption{Saturation and beta-binomial calibration on GSM8K (Qwen2.5-7B, Llama-3.1-8B, Mistral-7B, 5 seeds $\times$ 100 instances = 500 pooled per cell). $\Keff^{\text{vote}} = K/(1+(K{-}1)\hat{c})$; all models reach $96$--$98\%$ of the ceiling $1/\hat{c}$ at $K{=}32$. MV@$K$ denotes the binary majority-vote accuracy at $K$ paths (\S\ref{sec:setup_mv}), which is distinct from the mean per-path accuracy $\bar{p}$ of Table~\ref{tab:diversity}. BB (beta-binomial) and Binom (independence-binomial) columns are in-sample MV@32 predictions; both estimators are formally defined in the next subsection (Eq.~\eqref{eq:mv_corr}).}
\label{tab:cross_arch}
\centering
\vspace{5pt}
\footnotesize
\setlength{\tabcolsep}{1.9pt}
\begin{tabular}{lc cccc c cc}
\toprule
 & & \multicolumn{4}{c}{$\Keff^{\text{vote}}$ at $K{=}$} & & \multicolumn{2}{c}{Predicted MV@32} \\
\cmidrule(lr){3-6} \cmidrule(lr){8-9}
Model & $\hat{c}_{K{=}4}$ & 4 & 8 & 16 & 32 & MV@32 & BB & Binom \\
\midrule
Qwen2.5-7B   & 0.60 & 1.42 & 1.55 & 1.65 & 1.67 & 81.9\% & 81.1\% & 100.0\% \\
Llama-3.1-8B & 0.53 & 1.55 & 1.72 & 1.86 & 1.94 & 79.3\% & 77.8\% &  99.9\% \\
Mistral-7B   & 0.45 & 1.71 & 1.89 & 2.00 & 2.13 & 42.4\% & 41.3\% &  20.6\% \\
\bottomrule
\end{tabular}%
\end{table}

Table~\ref{tab:cross_arch} validates the vote-level saturation law across three model families spanning base accuracies of $43$--$79\%$.
Mistral has the lowest correctness correlation ($\hat{c}{=}0.45$), yielding the highest ceiling ($1/\hat{c}{=}2.21$) and the highest $\Keff^{\text{vote}}$ at $K{=}32$ (2.13).
The saturation law is model-agnostic: it depends only on $c$, not on the absolute accuracy level.

\subsubsection{Beta-Binomial Calibration}

We validate the correlated majority-vote model from \eqref{eq:mv_corr} (derivation in Appendix~\ref{app:betabin}) by fitting beta-binomial parameters $(\alpha, \beta)$ via method of moments (MoM) from the observed vote-count distribution, then comparing the predicted MV accuracy against empirical values.
We use MoM rather than maximum-likelihood or GLMM estimation because it yields closed-form $(\alpha, \beta)$ from $(p, c)$ alone, matching the pilot-based diagnostic use case; at $n{=}100$, MoM and MLE estimates are comparable for the overdispersion levels observed here.

Table~\ref{tab:cross_arch} shows that the independence-assuming binomial diverges catastrophically at $K{=}32$ (e.g., $100.0\%$ predicted vs.\ $81.9\%$ actual for Qwen; $99.9\%$ vs.\ $79.3\%$ for Llama; $20.6\%$ vs.\ $42.4\%$ for Mistral), while the beta-binomial predicts within $0.8$--$1.5$ pp of the observed MV accuracy.

\smallskip\noindent\textbf{Held-out prediction test.}
When $(\alpha, \beta)$ are fitted from a $K{=}4$ pilot on one half of the GSM8K instances and used to predict $K \in \{8, 16, 32\}$ on the other half, BB absolute error at $K{=}32$ is $3.3$--$4.8$ pp across all three models (averaged over both halves), whereas the binomial diverges by $18$--$24$ pp (Fig.~\ref{fig:bb_calibration}).

\subsubsection{Prompt-Template Diversity Across 12 Benchmarks}
\label{sec:cross_bench}

The value of prompt-template diversity lies in the structure it reveals: its success criterion is whether the diagnostic variable moves under a controlled prompt change on a fixed question set. This subsection measures that \emph{decorrelation}, which raises the saturation ceiling $1/c$. Decorrelation by itself does not raise base path accuracy $\bar{p}$, and observed MV accuracy changes are correspondingly small and mixed in sign (Appendix~\ref{app:pt_acc_delta}).
Temperature-only sampling yields exchangeable correctness patterns with limited decorrelation (Corollary~\ref{cor:special}); to break exchangeability, we assign structurally distinct prompt templates to each of the $K{=}8$ slots and measure $\Delta\rho$ against SC on the \emph{same model} across all 12 benchmarks.
After excluding 3 cells with base accuracy $<2\%$ (all on DROP, where $\hat{\rho}$ is numerically degenerate), PT reduces $\hat{\rho}$ in 55 of 57 remaining cells (Table~\ref{tab:full_matrix}), with mean $\Delta\rho{=}{-}38\%$ and mean $\Delta\Keff^{\text{vote}}{=}{+}0.9$. The two exceptions are Qwen-0.5B on MATH ($\Delta\rho{=}{+}9\%$, $\bar{p}{=}14\%$) and MBPP ($\Delta\rho{=}{+}0.4\%$, $\bar{p}{=}3.5\%$), both small-effect cells on closed/code tasks where the answer-space-gated framework predicts the weakest decorrelation.

\begin{figure*}[!t]
  \centering
  \begin{minipage}[t]{0.48\textwidth}
    \centering
    \includegraphics[width=0.7\linewidth]{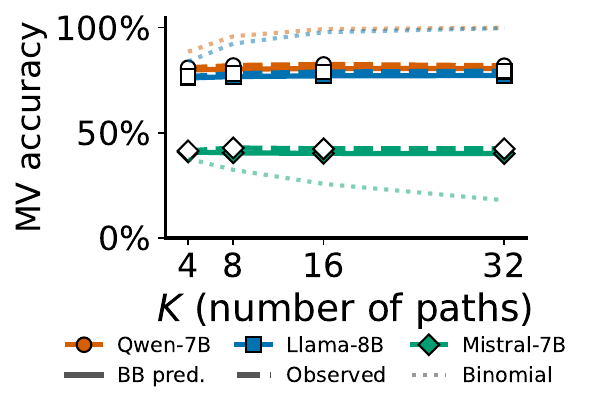}
    \caption{Held-out BB prediction on GSM8K: $(\alpha, \beta)$ fitted from $K{=}4$. BB tracks within $3.3$--$4.8$ pp at $K{=}32$ and the binomial diverges by $18$--$24$ pp.}
    \label{fig:bb_calibration}
  \end{minipage}\hfill
  \begin{minipage}[t]{0.48\textwidth}
    \centering
    \includegraphics[width=0.7\linewidth]{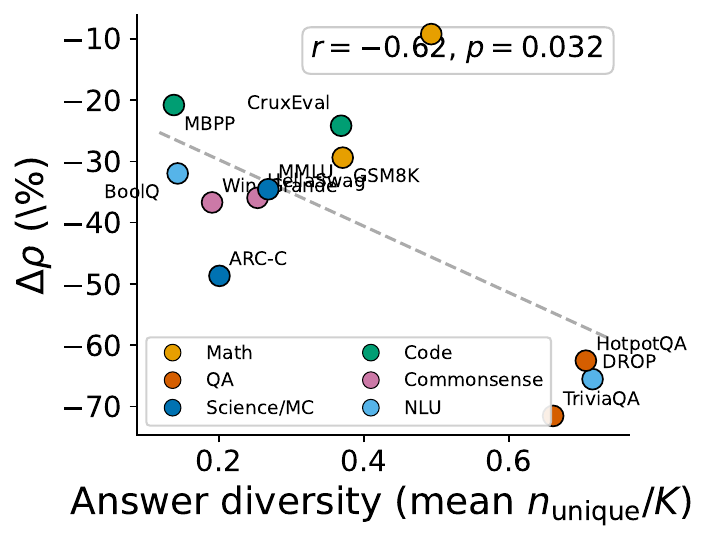}
    \caption{Answer diversity under SC ($n_{\mathrm{unique}}/K$) vs.\ PT decorrelation ($\Delta\rho$) across 12 benchmarks.}
    \label{fig:entropy_predictor}
  \end{minipage}
\end{figure*}

\begin{table*}[!t]
\caption{Prompt-template diversity across 12 benchmarks ($K{=}8$, pooled over 5 seeds, $n{=}250$ per cell except six cells with $100$--$235$ instances in one arm: MATH on Qwen-7B, Qwen-32B and Llama-8B, MMLU on Qwen-7B and Qwen-32B, and MBPP on Qwen-7B). Five models evaluated: Qwen2.5-0.5B/7B/32B, Llama-3.1-8B, Mistral-7B. The \emph{Answer space} column gives the effective evaluated output space (post-extraction value compared against gold), used as the relevant axis for the correlation analysis. $\Delta\rho$ is the \emph{relative} change in pairwise correlation, $(\hat{\rho}_{\text{PT}}-\hat{\rho}_{\text{SC}})/\hat{\rho}_{\text{SC}}$, expressed as a percentage (so $-71.5$ on TriviaQA means correlation drops by $71.5\%$ of its SC value, not by 71.5 percentage points). Cells with base accuracy $<2\%$ excluded; $n$ = number of valid models per benchmark. Each row reports the mean across $n$ models. 
}
\label{tab:full_matrix}
\centering
\vspace{5pt}
\footnotesize
\begin{tabular}{lllcrrrr}
\toprule
Domain & Benchmark & Answer space & $n$ & Mean $\Delta\rho$ (\%) $\downarrow$ & Mean $\Delta\Keff^{\text{vote}}$ $\uparrow$ & Mean $\hat{\rho}_{\text{SC}}$ & Mean $\hat{\rho}_{\text{PT}}$ \\
\midrule
\multirow{2}{*}{QA}
  & TriviaQA   & open text       & 5 & $-$71.5 & $+$2.1 & 0.68 & 0.19 \\
  & HotpotQA   & open text       & 5 & $-$62.5 & $+$2.0 & 0.55 & 0.17 \\
\midrule
\multirow{2}{*}{Science/MC}
  & ARC-C      & bounded (4)     & 5 & $-$48.7 & $+$1.1 & 0.60 & 0.28 \\
  & MMLU       & bounded (4)     & 5 & $-$34.6 & $+$0.7 & 0.48 & 0.31 \\
\midrule
\multirow{2}{*}{Commonsense}
  & HellaSwag  & bounded (4)     & 5 & $-$36.0 & $+$0.9 & 0.45 & 0.25 \\
  & WinoGrande & binary (2)      & 5 & $-$36.7 & $+$0.7 & 0.60 & 0.35 \\
\midrule
\multirow{2}{*}{NLU}
  & DROP$^\dagger$ & open text   & 2 & $-$65.5 & $+$2.0 & 0.37 & 0.13 \\
  & BoolQ      & binary (y/n)    & 5 & $-$32.0 & $+$0.5 & 0.85 & 0.59 \\
\midrule
\multirow{2}{*}{Code}
  & MBPP       & code pass/fail  & 5 & $-$20.8 & $+$0.3 & 0.67 & 0.51 \\
  & CruxEval   & code pass/fail  & 5 & $-$24.2 & $+$0.3 & 0.73 & 0.55 \\
\midrule
\multirow{2}{*}{Math}
  & GSM8K      & numeric         & 5 & $-$29.4 & $+$0.5 & 0.55 & 0.38 \\
  & MATH       & numeric         & 5 & $\hphantom{-}{-}9.2$ & $+$0.2 & 0.60 & 0.55 \\
\bottomrule
\multicolumn{8}{l}{\scriptsize $^\dagger$Lower confidence: only 2 valid model cells on DROP, because Qwen-0.5B, Qwen-7B and Mistral-7B fall under the $\bar{p} < 2\%$ exclusion.}
\end{tabular}%
\end{table*}

\smallskip\noindent\textbf{Answer-space structure determines diversity gain.}
$|\Delta\rho|$ correlates with the openness of each task's answer space: QA ($\approx{-}67\%$) $\gg$ MC/commonsense ($-35$ to $-49\%$) $>$ Code ($\approx{-}22\%$) $>$ Math ($-9\%$ to $-29\%$).
Mann-Whitney on Math ($n{=}10$) vs.\ QA ($n{=}10$) gives $p{=}5.0{\times}10^{-4}$, with the unit being the model-benchmark cell rather than independent instances; per-cell instance-level uncertainty (95\% bootstrap CI median 24~pp) is reported separately in Appendix~\ref{app:statistical_precision}.
The channel framing is consistent with this ordering: open-form QA is analogous to rich-scattering
conditions where distinct templates redirect attention across evidence passages even when the answer is wrong, whereas math is analogous to line-of-sight conditions where a reasoning chain arrives at the correct number or fails regardless of phrasing.
This differential is not visible from accuracy curves or aggregation weights alone, making path correlation the key diagnostic; we therefore use prompt-template perturbation as a structural probe of $c$ rather than a replacement for SC.

\smallskip\noindent\textbf{Predicting the ordering from SC alone.}
A natural question is whether the answer-space ordering can be estimated \emph{without} running the paired SC/PT comparison.
Fig.~\ref{fig:entropy_predictor} shows that mean answer diversity under SC (the fraction of unique answers per instance, $n_{\mathrm{unique}}/K$) correlates with $\Delta\rho$ across 12 benchmarks ($r{=}{-}0.62$, $p{=}0.032$), operationalizing the taxonomy as a quantity computable from a single SC run.
The unit is the per-benchmark mean across 5 models ($n{=}12$ points), so the strength is suggestive rather than precise; the direction matches the Mann-Whitney domain test on cell-level data.

\smallskip\noindent\textbf{Heterogeneous slot regime.}
When prompt templates create heterogeneous per-slot accuracy,
the exchangeability of Corollary~\ref{cor:special} breaks and the GLS analysis predicts room for non-uniform weighting; 
this is the empirical regime in which weighted MV materially outperforms uniform MV, operationalizing the ``room for weighting/pruning'' clause from the contributions. 
Consistent with Corollary~\ref{cor:gain}, weighted-MV gain is strongly correlated with the slot-accuracy coefficient of variation (CV; $r{=}0.91$), with $+17$ pp gains in the high-CV regime. Full results, an oracle-weighted upper bound, and implications for confidence-weighted methods (e.g., CISC~\cite{cisc2025}) are in Appendix~\ref{app:hetero_slot}.

\subsubsection{Adaptive-K}
\label{sec:adaptive_k}

The saturation law motivates a compute-allocation rule.
The marginal diversity gain from adding one path is $\partial \Keff^{\text{vote}} / \partial K = (1{-}c) / (1 + (K{-}1)c)^2$, which decreases in $K$.
Setting this marginal gain to a threshold $\varepsilon$ and solving yields:
\begin{equation}
\label{eq:adaptive_k}
K^* = \left\lceil \frac{\sqrt{(1{-}c)/\varepsilon} - 1}{c} + 1 \right\rceil.
\end{equation}
For the default threshold $\varepsilon{=}0.025$ (2.5\% marginal gain) and a $K{=}4$ pilot estimate $\hat{c}$, the shorthand $\lceil 2/\hat{c}^2 \rceil$ lies within one path of \eqref{eq:adaptive_k}, and never above it, for typical $\hat{c} \in [0.5, 0.7]$.
Algorithm~\ref{alg:adaptive_k} (Appendix~\ref{app:algorithm}) lists the full procedure.

\begin{table*}[!t]
\caption{Adaptive-K compute savings across three domains. $\hat{c}$ is estimated from $K{=}4$ SC (5 seeds $\times$ $n{=}100$ = 500 pooled for GSM8K, 3 seeds $\times$ $n{=}100$ = 300 for HotpotQA and BoolQ). Retained $=$ MV accuracy at $K^*$ as a percentage of MV at $K{=}32$. Net cost $=(K^*+4)/32$ accounts for the $K{=}4$ pilot in addition to the operating-point paths.}
\label{tab:adaptive_k}
\centering
\vspace{5pt}
\footnotesize
\begin{tabular}{llcccccc}
\toprule
Task & Model & $\hat{c}_{K{=}4}$ & $K^*$ & MV@$K^*$ & MV@32 & Retained & Net cost \\
\midrule
GSM8K (Math)     & Qwen-7B     & 0.60 & 6 & 81.6\% & 81.9\% & 100\% & 31\% \\
GSM8K (Math)     & Llama-8B    & 0.53 & 8 & 78.1\% & 79.3\% &  98\% & 38\% \\
GSM8K (Math)     & Mistral-7B  & 0.45 & 10 & 43.6\% & 42.4\% & 103\%$^\dagger$ & 44\% \\
\midrule
HotpotQA (QA)    & Llama-8B    & 0.61 & 6 & 50.2\% & 52.2\% &  96\% & 31\% \\
BoolQ (NLU)      & Llama-8B    & 0.79 & 4 & 80.2\% & 80.3\% & 100\% & 25\% \\
\bottomrule
\multicolumn{8}{l}{\scriptsize $^\dagger$Retention above $100\%$ is consistent with finite-sample variance: the paired bootstrap}\\
\multicolumn{8}{l}{\scriptsize \phantom{$^\dagger$}95\% CI of MV@$K^*{-}$MV@32 contains zero in every cell.}
\end{tabular}%
\end{table*}

Table~\ref{tab:adaptive_k} shows that a $K{=}4$ pilot reliably predicts saturation across Math, QA, and NLU: $K^*$ retains $96$--$103\%$ of MV@$K{=}32$ at net inference cost $25$--$44\%$ of full $K{=}32$ SC, with higher-correlation tasks saturating earlier (the single $>$100\% cell is annotated by the dagger). The rule is also stable to $\varepsilon$: varying it from $0.01$ to $0.1$ changes $K^*$ but preserves $97$--$101\%$ accuracy on Qwen-7B GSM8K (Appendix~\ref{app:eps_sensitivity}).

\section{Conclusion}
\label{sec:conclusion}

This paper formalized multi-path LLM reasoning as a diversity combining problem.
The design-effect formula $\Keff^{\text{vote}} = K/(1{+}(K{-}1)c)$ predicts that 32 paths yield a design-effect effective sample size of only 1.7--2.1, a ceiling confirmed across three model families.
GLS analysis shows that the symmetric linear combiner of latent embeddings is uniform under exchangeability, supporting uniform majority vote as the natural default in standard SC and motivating slot-pruning and weighted-MV variants in the heterogeneous-template regime.
Prompt-template diversity reduces path correlation in 55 of 57 valid cells across 12 benchmarks, with the reduction answer-space-gated (QA ${-}67\%$ vs math $-9$--$29\%$, $p{<}10^{-3}$).
An Adaptive-K rule derived from the saturation law retains $96$--$103\%$ of accuracy across Math, QA, and NLU domains from a four-path pilot.
%



\begin{ack}
The authors received no specific funding for this work and declare no competing interests.
\end{ack}


\bibliography{refs}
\bibliographystyle{IEEEtran}

\appendix

\section{Notation}
\label{app:notation}

\begin{table}[h]
\caption{Notation used throughout the paper.}
\label{tab:notation}
\centering
\vspace{5pt}
\begin{threeparttable}
\begin{tabular}{cl}
\toprule
\textbf{Symbol} & \multicolumn{1}{c}{\textbf{Meaning}} \\
\midrule
\cellcolor{pink!0} $x \in \mathcal{A}$ & Ground-truth answer in discrete answer space. \\
\cellcolor{pink!0} $\mathbf{z}(x) \in \mathbb{R}^D$ & Latent embedding of answer $x$. \\
\cellcolor{pink!0} $K$ & Number of reasoning paths. \\
\cellcolor{pink!0} $h_k$ & Channel gain of path $k$ ($h_k > 0$). \\
\cellcolor{pink!0} $\mathbf{n}_k$ & Additive noise vector of path $k$. \\
\cellcolor{pink!0} $\rho \in [0,1]$ & Latent channel gain correlation (in $\mathbf{R}$). \\
\cellcolor{pink!0} $p$ & Single-path accuracy $\Pr(Y_k = 1)$. \\
\midrule
\cellcolor{yellow!0} $\Keff^{\text{rank}}$ & Latent effective rank (participation ratio). \\
\cellcolor{yellow!0} $\Keff^{\text{vote}}$ & Vote-level effective sample size (design effect). \\
\cellcolor{yellow!0} $c$ & Pairwise correctness correlation $\Corr(Y_i, Y_j)$. \\
\cellcolor{yellow!0} $\mathbf{R}$ & Gain covariance matrix ($K \times K$). \\
\cellcolor{yellow!0} $\boldsymbol{\Sigma}$ & Branch error covariance ($K \times K$). \\
\cellcolor{yellow!0} $\mathbf{g}$ & Path quality gain vector. \\
\bottomrule
\end{tabular}%
\begin{tablenotes}
\scriptsize
\item Upper: problem and channel variables. Lower: analysis quantities.
\end{tablenotes}
\end{threeparttable}
\end{table}

\section{Assumption Map}
\label{app:assumption_map}

Table~\ref{tab:assumption_map} records which assumptions each result uses.
The results the experiments test sit on the correlated-Bernoulli voting layer and are computed from observable correctness alone.
The latent channel model \eqref{eq:channel} enters only the latent-rank part of Theorem~\ref{thm:keff}, and the GLS results rest on the separate linear embedding model of \S\ref{sec:gls}.

\begin{table*}[!t]
\caption{Assumption dependencies of each result. ``Observable'' marks results computed from output-level correctness alone.}
\label{tab:assumption_map}
\centering
\vspace{5pt}
\footnotesize
\begin{tabular}{p{4.6cm}p{10.4cm}c}
\toprule
Result & Assumptions used & Observable \\
\midrule
$\Keff^{\text{vote}}$, Eq.~\eqref{eq:keff_vote} & Equicorrelated correctness indicators (design effect~\cite{kish1965}) & \cmark \\
Theorem~\ref{thm:mv} & i.i.d.\ Bernoulli correctness (independence reference) & \cmark \\
Beta-binomial, Eq.~\eqref{eq:mv_corr} & $\Theta \sim \text{Beta}(\alpha,\beta)$, $Y_k \mid \Theta$ i.i.d.\ Bernoulli$(\Theta)$~\cite{skellam1948} & \cmark \\
Adaptive-K, Eq.~\eqref{eq:adaptive_k} & $\Keff^{\text{vote}}$ and a marginal-gain threshold $\varepsilon$ & \cmark \\
Theorem~\ref{thm:keff}, latent rank & Eq.~\eqref{eq:channel} with Assumption~\ref{asm:exchangeable} and the equicorrelated matrix \eqref{eq:corr_matrix} & \xmark \\
Proposition~\ref{prop:bridge} & Gaussian decision surrogate (Assumption~\ref{asm:gaussian_decision}) & \xmark \\
Theorem~\ref{thm:gls} & Linear embedding model $\mathbf{e}_k = g_k \mathbf{s} + \boldsymbol{\varepsilon}_k$~\cite{aitken1935} & \xmark \\
Corollaries~\ref{cor:special}--\ref{cor:gain} & As Theorem~\ref{thm:gls} with $\mathbf{g} = \mathbf{1}$ (and symmetric $\boldsymbol{\Sigma}$ for Corollary~\ref{cor:special}) & \xmark \\
GLS-R, Eq.~\eqref{eq:gls_empirical} & Access to path embeddings & \xmark \\
\bottomrule
\end{tabular}
\end{table*}

\section{Positioning: Analytical Coverage of Prior Work}
\label{app:comparison}

\begin{table*}[!t]
\caption{Analytical coverage of multi-path reasoning studies. Existing methods focus on aggregation strategies, online stopping, or empirical scaling laws; this work provides an upstream correlation-based diagnostic framework. \cmark\ = yes, \xmark\ = no, \pmark\ = partial. Superscripts indicate the predictive mechanism: $^{\mathrm{online}}$ stops sampling per query based on observed agreement/quality; $^{\mathrm{difficulty}}$ predicts $K$ from a query-difficulty mixture model fit to scaling curves; $^{\mathrm{correlation}}$ predicts $K^*$ from a closed-form ceiling derived from inter-path correctness correlation. Compound-inference scaling~\cite{chen2024morellmcalls} fits query-difficulty heterogeneity as a mixture over scaling curves. Our $\hat{c}$ equals the corrected between-instance variance of per-instance accuracy divided by $\bar{p}(1-\bar{p})$ (\S\ref{sec:path_corr}), so it summarizes the same heterogeneity in one vote-level statistic without decomposing its sources.}
\label{tab:comparison}
\centering
\vspace{5pt}
\footnotesize
\begin{tabular}{lccccc}
\toprule
\textbf{Study} & \shortstack{Formal\\saturation\\bound} & \shortstack{Aggregation\\optimality\\analysis} & \shortstack{Predicts\\$K^*$} & \shortstack{Cross-task\\characterization} & \shortstack{Answer-space\\analysis} \\
\midrule
SC~\cite{wang2023selfconsistency}        & \xmark & \xmark & \xmark & \xmark & \xmark \\
CISC~\cite{cisc2025}    & \xmark & \xmark & \xmark & \xmark & \xmark \\
Entropy Voting~\cite{entropyvoting2025}          & \xmark & \xmark & \xmark & \xmark & \xmark \\
Optimal Agg.~\cite{optimalagg2025}     & \xmark & \pmark & \xmark & \xmark & \xmark \\
SoftCoT~\cite{softcot2025,softthinking2025} & \xmark & \xmark & \xmark & \xmark & \xmark \\
Test-time scaling~\cite{snell2024scaling} & \xmark & \xmark & \pmark & \xmark & \xmark \\
Adaptive-Consistency~\cite{aggarwal2023adaptive} & \xmark & \xmark & \pmark$^{\mathrm{online}}$ & \xmark & \xmark \\
ESC~\cite{li2024esc}                              & \xmark & \xmark & \pmark$^{\mathrm{online}}$ & \xmark & \xmark \\
RASC~\cite{wan2025rasc}                           & \xmark & \pmark & \pmark$^{\mathrm{online}}$ & \xmark & \xmark \\
Compound-inf.\ scaling~\cite{chen2024morellmcalls} & \pmark & \xmark & \cmark$^{\mathrm{difficulty}}$ & \pmark & \xmark \\
LLM Monkeys~\cite{brown2024monkeys}               & \xmark & \xmark & \xmark & \pmark & \xmark \\
\midrule
\textbf{This work}      & \cmark & \cmark & \cmark$^{\mathrm{correlation}}$ & \cmark & \cmark \\
\bottomrule
\end{tabular}%
\end{table*}

\section{Design-Effect Derivation}
\label{app:design_effect}

The vote-level effective sample size in \eqref{eq:keff_vote} follows from the standard design-effect formula for correlated binary variables.
For exchangeable $Y_1, \ldots, Y_K$ with $\mathbb{E}[Y_k]=p$ and $\Corr(Y_j, Y_k)=c$ for $j \ne k$, each variance is $p(1{-}p)$ and each of the $K(K{-}1)$ off-diagonal covariances is $c\,p(1{-}p)$, so
\begin{multline}
\Var(\bar{Y}) = \frac{1}{K^2}\Big[K p(1{-}p) + K(K{-}1)\,c\,p(1{-}p)\Big] \\ = p(1{-}p)\,\frac{1 + (K{-}1)c}{K},
\end{multline}
so the effective sample size relative to $K$ i.i.d.\ draws is $\Keff^{\text{vote}} = K / (1 + (K{-}1)c)$~\cite{kish1965}, the number of independent draws whose mean has the same variance $p(1{-}p)/\Keff^{\text{vote}}$.
Eq.~\eqref{eq:corr_c} is the sample analog of $\Corr(Y_j, Y_k)$ pooled over instances, and averaging it over path pairs gives the $\hat{c}$ that enters this formula.
Monotonicity (Proposition~\ref{prop:bridge}) guarantees qualitative consistency (higher $\rho_d$ implies lower ceiling), not numerical equivalence between $c$ and $\rho^2$; the two quantities enter different-level formulas.
The quantitative accuracy of $\hat{c}$ as a predictor of observed MV accuracy is validated empirically in \S\ref{sec:experiments} (Fig.~\ref{fig:bb_calibration}).

\section{Binary Collapse Details and Scope}
\label{app:binary_collapse}

The formal system model in \S\ref{sec:model} assumes a finite answer space $\mathcal{A}$ with nearest-neighbor decoding, which directly applies to closed-form (numeric) and multiple-choice tasks.
For open-form QA tasks (HotpotQA, TriviaQA, DROP), we adopt F1 $\ge 0.5$ as the thresholding criterion for partial matches when computing $Y_k = \mathbf{1}\{a_k = x\}$.
Under this reduction, all theoretical results that depend on binary correctness (Theorem~\ref{thm:mv}, the design-effect formula \eqref{eq:keff_vote}, and the beta-binomial model \eqref{eq:mv_corr}) apply regardless of the original output format.
The GLS analysis (Theorem~\ref{thm:gls}) operates on latent embeddings and does not depend on binary collapse; see \S\ref{sec:gls}.

\section{Kappa vs.~Correctness Correlation}
\label{app:kappa_comparison}

An alternative estimator of path agreement is Cohen's kappa $\kappa = (p_{\text{agree}} - p_{\text{chance}}) / (1 - p_{\text{chance}})$, which measures answer-identity agreement corrected for chance.
Since $\kappa$ conflates ``both correct'' and ``both wrong with the same wrong answer,'' it differs from $c$ in general.
We use $\hat{c}$ throughout as it is the quantity that enters the effective sample size formula \eqref{eq:keff_vote}.

\section{Proof of Theorem~\ref{thm:keff} (Effective Diversity Order)}
\label{app:keff_proof}

The eigenvalues of the equally-correlated matrix $\mathbf{R}$ with diagonal $\sigma_h^2$ and off-diagonal $\sigma_h^2 \rho$ are:
\begin{align}
\lambda_1 &= \sigma_h^2 (1 + (K{-}1)\rho), \quad \text{[multiplicity 1]} \\
\lambda_j &= \sigma_h^2 (1 - \rho), \quad j = 2, \ldots, K.
\end{align}
The trace and squared trace are:
\begin{align}
\tr \mathbf{R} &= K \sigma_h^2, \\
\tr(\mathbf{R}^2) &= \sigma_h^4 \bigl[ (1 + (K{-}1)\rho)^2 + (K{-}1)(1{-}\rho)^2 \bigr].
\end{align}
Expanding the denominator:
\begin{align}
&(1 + (K{-}1)\rho)^2 + (K{-}1)(1{-}\rho)^2 \notag \\
&= 1 + 2(K{-}1)\rho + (K{-}1)^2 \rho^2 \notag \\
&\quad + (K{-}1) - 2(K{-}1)\rho + (K{-}1)\rho^2 \notag \\
&= K + K(K{-}1)\rho^2.
\end{align}
Therefore $\Keff^{\text{rank}} = K^2 \sigma_h^4 / [\sigma_h^4 (K + K(K{-}1)\rho^2)] = K / (1 + (K{-}1)\rho^2)$.
Taking $K \to \infty$ yields $\Keff^{\text{rank}} \to 1/\rho^2$.

\section{Proof of Theorem~\ref{thm:gls} and Special Cases}
\label{app:gls_proof}

\noindent\textbf{Proof of Theorem~\ref{thm:gls}.}
Form the Lagrangian $\mathcal{L}(\mathbf{w}, \nu) = \mathbf{w}^\top \boldsymbol{\Sigma} \mathbf{w} + \nu (\mathbf{g}^\top \mathbf{w} - 1)$.
Stationarity gives $2\boldsymbol{\Sigma} \mathbf{w} + \nu \mathbf{g} = \mathbf{0}$, hence $\mathbf{w} = -(\nu/2) \boldsymbol{\Sigma}^{-1} \mathbf{g}$.
Substituting into the constraint $\mathbf{g}^\top \mathbf{w} = 1$ yields the stated formula $\mathbf{w}_* = \boldsymbol{\Sigma}^{-1} \mathbf{g} / (\mathbf{g}^\top \boldsymbol{\Sigma}^{-1} \mathbf{g})$.

\smallskip\noindent\textbf{Additional special cases of Corollary~\ref{cor:special}.}
Beyond the equicorrelated symmetric case stated in the main text, the GLS combiner recovers:
\begin{enumerate}
\item \textbf{White noise} ($\boldsymbol{\Sigma} = \sigma^2 \mathbf{I}$): $\mathbf{w}_* \propto \mathbf{g}$, i.e., quality-weighted combining (the MRC principle).
\item \textbf{Equal gains} ($\mathbf{g} = \mathbf{1}$): $\mathbf{w}_* = \boldsymbol{\Sigma}^{-1} \mathbf{1} / (\mathbf{1}^\top \boldsymbol{\Sigma}^{-1} \mathbf{1})$, i.e., covariance-aware averaging.
\end{enumerate}

\section{Heterogeneous Slot Regime: Weighted Voting and Oracle Bound}
\label{app:hetero_slot}

This appendix collects the empirical follow-ups to Corollary~\ref{cor:gain} for prompt-template (PT) data, where heterogeneous per-slot accuracy breaks the symmetric exchangeable regime.

\smallskip\noindent\textbf{When does uniform voting fail?}
We compute accuracy-weighted voting across all 60 PT cells and measure the gain over MV as a function of slot-accuracy coefficient of variation (CV).
The correlation between CV and weighted-MV gain is $r{=}0.91$ ($p{<}10^{-4}$): when slot accuracies are heterogeneous (CV $> 0.225$), weighted voting gains $+16.7$ pp on average over uniform MV; when slots are near-homogeneous (CV $\le 0.225$), the gain is only $+1.3$ pp.
This effect is concentrated in the low-accuracy regime: cells with MV accuracy at least 30\% gain only ${+}1.6$ pp on average from weighted MV ($\mathrm{CV}{=}0.135$), while cells below 30\% gain ${+}14.5$ pp ($\mathrm{CV}{=}0.656$).
The two partitions are closely aligned: 74\% of low-accuracy cells also have CV $> 0.225$, while only 14\% of high-accuracy cells do, confirming that the CV threshold and the MV accuracy threshold identify the same regime.

\smallskip\noindent\textbf{Oracle upper bound.}
WMV slot weights above are estimated on the same evaluation set (an oracle upper bound); deployment requires held-out weight calibration.
Let $\mathbf{w}^*$ denote the oracle accuracy-weighted combiner.
Any deployable weighting scheme $\hat{\mathbf{w}}$ (including confidence-based methods such as CISC~\cite{cisc2025}) satisfies:
\begin{multline}
\label{eq:oracle_bound}
\mathrm{Acc}(\hat{\mathbf{w}}) \;\le\; \mathrm{Acc}(\mathbf{w}^*) \\ \;\le\; \mathrm{Acc}(\mathbf{w}_{\text{unif}}) + 1.5\;\text{pp} \quad (\text{SC}, \;\mathrm{CV} \le 0.24).
\end{multline}
The $1.5$ pp gap is the empirical ceiling across all near-exchangeable cells, confirming that uniform MV is near-optimal in the standard SC regime and that confidence weighting cannot meaningfully improve upon it.

\section{Scope and Boundaries}
\label{app:scope_boundaries}

\smallskip\noindent\textbf{What the framework enables.}
The saturation formula \eqref{eq:keff_vote} uses the observable $\hat{c}$ directly at the vote level; Proposition~\ref{prop:bridge} justifies treating $\hat{c}$ as a monotone proxy for the unobservable latent dependence.
The GLS derivation establishes that uniform weighting is optimal among linear combiners of latent embeddings under exchangeability (Theorem~\ref{thm:gls}, Corollary~\ref{cor:special}); majority vote on decoded answers inherits this optimality when decoding preserves the model's symmetry (Remark~\ref{rem:gls_to_mv}).
The channel model provides a qualitative account of the answer-space ordering: open spaces create rich-scattering conditions and closed spaces create line-of-sight.

\smallskip\noindent\textbf{Practical applicability.}
The framework is designed as an offline diagnostic tool for multi-path reasoning pipelines.
Applying it to a new model-task pair requires (1) a calibration run of $K{=}4$ SC paths on $n \approx 50$--$100$ instances to estimate $\hat{c}$ (Appendix~\ref{app:pilot_sensitivity} confirms $n{=}50$ yields $\hat{c}$ with CV $\le 14.2\%$ and $K^*$ std $\le 2.0$), and (2) evaluating the closed-form expressions for $\Keff^{\text{vote}}$ and $K^*$.
The calibration run costs $4n$ path generations once per model-task pair, which is $12.5\%$ of a $K{=}32$ budget on the calibration instances and is amortized over all later queries, which need no additional inference.
With $n{=}100$, the one-time pilot costs $400$ paths, and serving $N$ queries at $K^*$ costs $400 + K^* N$ paths against $32N$ at fixed $K{=}32$.
On the five cells of Table~\ref{tab:adaptive_k} the pilot is repaid after $15$--$19$ queries, and at $N{=}1000$ the total cost including the pilot is $13.8$--$32.5\%$ of the fixed-$K{=}32$ budget.
Calibration uses gold labels on the $n$ pilot instances only, and deployment is label-free.
The estimate belongs to one (model, prompt, task) configuration: a model update, a prompt change or a distribution shift defines a new configuration and triggers recalibration, and transfer of an operating point without recalibration is outside the claimed regime.
For API-based deployments where hidden-state access is unavailable, the framework remains applicable: $\hat{c}$ is estimated from output-level correctness patterns alone, and the Adaptive-K rule operates entirely on observed vote statistics.

\smallskip\noindent\textbf{Equicorrelated approximation quality.}
The theoretical results assume Assumption~\ref{asm:exchangeable} with a common correlation $\rho$.
Prompt-template diversity breaks this symmetry: different templates produce heterogeneous slot accuracy.
To quantify the approximation, we compute the variance-based $\Keff^{\text{block}} = \bar{p}(1{-}\bar{p}) / \Var(S_K/K)$ from the full $K{\times}K$ correlation matrix across 58 valid PT cells (two cells excluded due to degenerate per-slot variance) and compare to the equicorrelated $\Keff^{\text{vote}}$.
The mean ratio $\Keff^{\text{block}} / \Keff^{\text{vote}} = 1.16$ (std $0.30$), so the equicorrelated formula is conservative on average.
For closed-answer tasks the ratio is near 1.0; for open QA it reaches up to $2.5\times$ (TriviaQA/Qwen-7B), indicating that heterogeneous slot structure provides more diversity than the equicorrelated model credits.
The binary-collapsed correctness model loses information about the wrong-answer distribution in multiclass settings, which $\Keff^{\text{vote}}$ does not capture.

\section{Statistical Precision of $\Delta\rho$ Estimates}
\label{app:statistical_precision}

Bootstrap confidence intervals (10{,}000 instance-resampling replicates per cell) at the pooled sample size $n{=}250$ give per-cell 95\% CI widths with median 24 pp (IQR 18--39 pp).
For QA benchmarks, most cells exclude zero (TriviaQA across all five models: $[-86\%, -44\%]$; HotpotQA four of five, with Qwen-0.5B at $\bar{p}{=}2.7\%$ showing wide bounds), confirming that the large decorrelation effect is robust.
For math benchmarks, individual-cell CIs include zero in several cases (e.g., MATH Qwen-32B: $\Delta\rho{=}{-}2.1\%$, CI $[-12\%, +8\%]$), consistent with the small effect size.
Across 57 valid cells, 43 have CIs that exclude zero at the 95\% level; the remaining cells lack statistical power rather than showing null effects.
The pooled range of per-cell $\Delta\rho$ estimates spans $[-81\%, +9\%]$; this breadth reflects the heterogeneity of effect sizes across task types (large for QA/MC, small for Math/Code), not an absence of effect.
The between-domain ordering is robust to resampling: the Mann-Whitney $U$ test on bootstrapped $\Delta\rho$ distributions maintains $p < 0.001$ in $>99\%$ of bootstrap replicates.
We exclude cells with $\bar{p} < 2\%$ where correlation estimates are unreliable (3 cells of 60, all on DROP: Qwen-0.5B, Qwen-7B, Mistral-7B).

\section{Prompt-Template MV Accuracy Deltas}
\label{app:pt_acc_delta}

Table~\ref{tab:pt_acc_delta} reports per-domain MV accuracy under SC and prompt-template (PT), and the difference $\Delta\mathrm{Acc} = \mathrm{MV}_{\text{PT}} - \mathrm{MV}_{\text{SC}}$, averaged over valid model-benchmark cells ($K{=}8$, 5 seeds $\times$ $n{=}50$ pooled). $\Delta\mathrm{Acc}$ is small (within $\pm 4$ pp on average) and mixed in sign across the six domains, with within-domain standard deviations comparable to or larger than the means. This pattern is consistent with the $\Keff^{\text{vote}}$ analysis: PT enlarges the saturation ceiling $1/c$ but does not by itself raise base per-path accuracy $\bar{p}$, so the operational MV accuracy remains near the SC level. The diagnostic value of PT in this paper is therefore the \emph{structure} it reveals (Section~\ref{sec:cross_bench}, Table~\ref{tab:full_matrix}), not a uniform accuracy improvement over SC.

\begin{table}[h]
\caption{Per-domain MV accuracy under SC and PT ($K{=}8$, 5 seeds $\times$ $n{=}50$ pooled). $n$ = number of valid model-benchmark cells (after $\bar{p}<2\%$ exclusion). $\pm$ denotes within-domain standard deviation across cells.}
\label{tab:pt_acc_delta}
\centering
\vspace{5pt}
\small
\begin{tabular}{lcccc}
\toprule
Domain & $n$ & $\mathrm{MV}_{\text{SC}}$ & $\mathrm{MV}_{\text{PT}}$ & $\Delta\mathrm{Acc}$ (pp) \\
\midrule
QA          & 10 &  6.9\% &  6.5\% & $-0.4 \pm 1.8$ \\
Sci/MC      & 10 & 60.7\% & 59.3\% & $-1.5 \pm 5.4$ \\
Commonsense & 10 & 63.6\% & 63.9\% & $+0.3 \pm 8.7$ \\
NLU         &  7 & 50.2\% & 52.5\% & $+2.3 \pm 2.1$ \\
Code        & 10 & 13.1\% &  9.3\% & $-3.9 \pm 11.2$ \\
Math        & 10 & 46.7\% & 45.4\% & $-1.3 \pm 3.8$ \\
\bottomrule
\end{tabular}
\end{table}

\section{Reasoning Model}
\label{app:reasoning}

We apply the saturation and Adaptive-K protocol to Qwen3.5-9B in thinking mode on GSM8K, with 3 seeds (42, 123, 456) $\times$ 100 instances, $K{=}32$ paths per instance sampled at $\tau{=}0.7$ and top-$p$ $0.95$, and a generation cap of 16{,}384 tokens.
Each $K$ is evaluated on the first $K$ of the 32 paths.
5.4\% of paths reach the cap, and these are scored as incorrect.
The model is close to its accuracy ceiling on this task, with mean per-path accuracy 92.9\%.
The $K{=}4$ pilot gives $\hat{c}{=}0.33$, and at $K{=}32$ the measured $\hat{c}{=}0.38$ gives a ceiling $1/\hat{c}{=}2.67$ with $\Keff^{\text{vote}}{=}2.53$.

Table~\ref{tab:reasoning} reports the held-out beta-binomial test with the protocol of \S\ref{sec:cross_arch}. The parameters $(\alpha, \beta)$ are fitted from the $K{=}4$ pilot on one half of the instances and used to predict MV@$K$ on the other half.
The beta-binomial error is $0.8$--$1.7$ pp across $K$, whereas the independence model is off by $2.8$--$3.2$ pp.
Adaptive-K selects $K^*{=}14$ and retains 99.8\% of MV@32 (paired bootstrap 95\% CI of MV@$K^*{-}$MV@32: $[-0.5, 0.0]$ pp) at a net cost of 56\% of the fixed $K{=}32$ budget.
Harder benchmarks such as AIME need a generation budget above 16{,}384 tokens and are left to future work.

\begin{table}[h]
\caption{Qwen3.5-9B (thinking) on GSM8K, 3 seeds $\times$ 100 instances. Observed binary MV and held-out absolute prediction errors (pp), averaged over the two instance halves.}
\label{tab:reasoning}
\centering
\vspace{5pt}
\footnotesize
\begin{tabular}{cccc}
\toprule
$K$ & Observed MV & Beta-binomial error & Independence error \\
\midrule
4 & 95.8\% & 0.8 & 2.8 \\
8 & 96.8\% & 1.2 & 3.1 \\
16 & 96.8\% & 1.3 & 3.2 \\
32 & 96.8\% & 1.7 & 3.2 \\
\bottomrule
\end{tabular}
\end{table}

\section{Limitations and Future Work}
\label{app:limitations}

\smallskip\noindent\textbf{Limitations.}
The Adaptive-K rule uses a global $\hat{c}$ estimate and does not adapt to per-instance difficulty variation; incorporating instance-level confidence into $K^*$ is a natural refinement.
The GLS analysis (Theorem~\ref{thm:gls}) requires access to hidden-state embeddings; for API-only deployments the vote-level framework remains applicable with the observable $\hat{c}$ (Proposition~\ref{prop:bridge}), and the Adaptive-K rule operates entirely on output statistics.
Per-cell $\Delta\rho$ estimates at the pooled sample size $n{=}250$ (5 seeds $\times$ 50 instances) have 95\% CI widths with median $24$ pp (IQR $18$--$39$ pp) (Appendix~\ref{app:statistical_precision}); the domain-level ordering is robust (Mann-Whitney $U$, $p<10^{-3}$), but individual small-effect math/code cells lack power and would benefit from larger $n$.
Adaptive-K fixes one budget per (model, prompt, task) configuration before deployment, while online-stopping methods~\cite{aggarwal2023adaptive,li2024esc,wan2025rasc} act per query during sampling. A matched-compute comparison with them remains open.

\smallskip\noindent\textbf{Future work.}
Optimal prompt-template design under a decorrelation--accuracy trade-off, and extensions to block-covariance or non-exchangeable regimes, remain open.

\section{Beta-Binomial Correlated Voting Model}
\label{app:betabin}

Under the beta-binomial model $\Theta \sim \text{Beta}(\alpha, \beta)$, $Y_k \mid \Theta \sim \text{Bernoulli}(\Theta)$, the moments are $p = \mathbb{E}[\Theta] = \alpha/(\alpha+\beta)$ and, for $j \ne k$, $\Cov(Y_j, Y_k) = \Var(\Theta) = \alpha\beta/[(\alpha+\beta)^2(\alpha+\beta+1)]$.
Dividing by $\Var(Y_k) = p(1{-}p) = \alpha\beta/(\alpha+\beta)^2$ gives $c = 1/(\alpha+\beta+1)$, so $\alpha+\beta = (1{-}c)/c$, and the method-of-moments estimates follow from $\alpha = p(\alpha+\beta)$ and $\beta = (1{-}p)(\alpha+\beta)$:
\begin{equation}
\alpha = \frac{p(1-c)}{c}, \quad \beta = \frac{(1-p)(1-c)}{c},
\end{equation}
where $p = \mathbb{E}[Y_k]$ and $c = \Corr(Y_i, Y_j) = 1/(\alpha + \beta + 1)$.
The probability mass function of the count $S_K = \sum_k Y_k$ is:
\begin{equation}
P(S_K = j) = \binom{K}{j} \frac{B(j + \alpha, K - j + \beta)}{B(\alpha, \beta)}.
\end{equation}
The correlated majority-vote accuracy is:
\begin{equation}
P_{\text{MV}}^{\text{corr}} = \sum_{j > K/2} P(S_K{=}j).
\end{equation}
For even $K$ with random tie-breaking, add $\tfrac{1}{2}\, P(S_K{=}K/2)$.

\section{Pilot Size Sensitivity}
\label{app:pilot_sensitivity}

Table~\ref{tab:pilot_sensitivity} shows the stability of $\hat{c}$ and $K^*$ estimates as a function of pilot size $n$, computed via 500 bootstrap resamples of instances (with all seed replicates of an instance kept together) from the pooled 5-seed $K{=}4$ SC data on GSM8K.
At $n{=}50$, $\hat{c}$ has a coefficient of variation of at most $14.2\%$ across all three models and $K^*$ standard deviation is $1.5$--$2.0$, sufficient for practical guidance.
At $n{=}25$, $\hat{c}$ CV reaches $20.5$--$21.5\%$ and $K^*$ variance widens (std $3.1$--$6.8$), motivating $n{\ge}50$ as the minimum recommended pilot size.

\begin{table}[h]
\caption{Pilot size sensitivity on GSM8K ($K{=}4$, 500 bootstrap resamples of instances from pooled 5-seed data).}
\label{tab:pilot_sensitivity}
\centering
\vspace{5pt}
\begin{tabular}{llcccc}
\toprule
Model & $n$ & $\hat{c}$ mean & $\hat{c}$ std & $K^*$ mean & $K^*$ std \\
\midrule
\multirow{4}{*}{Qwen-7B}
  & 25 & 0.571 & 0.122 & 7.9 & 6.77 \\
  & 50 & 0.589 & 0.083 & 6.9 & 1.69 \\
  & 100 & 0.596 & 0.055 & 6.6 & 1.02 \\
  & 200 & 0.605 & 0.039 & 6.5 & 0.72 \\
\midrule
\multirow{4}{*}{Llama-8B}
  & 25 & 0.512 & 0.105 & 8.7 & 3.14 \\
  & 50 & 0.522 & 0.068 & 8.1 & 1.53 \\
  & 100 & 0.526 & 0.048 & 8.0 & 1.06 \\
  & 200 & 0.525 & 0.034 & 7.9 & 0.77 \\
\midrule
\multirow{4}{*}{Mistral-7B}
  & 25 & 0.437 & 0.090 & 10.6 & 3.10 \\
  & 50 & 0.438 & 0.062 & 10.3 & 1.98 \\
  & 100 & 0.447 & 0.044 & 9.9 & 1.30 \\
  & 200 & 0.446 & 0.031 & 9.9 & 0.92 \\
\bottomrule
\end{tabular}
\end{table}

\section{Proof of Corollary~\ref{cor:gain}}
\label{app:gain_proof}

Let $\mathbf{w}_u = (1/K)\mathbf{1}$.
Since $\mathbf{w}_*$ minimizes $\mathbf{w}^\top \boldsymbol{\Sigma} \mathbf{w}$ subject to $\mathbf{g}^\top \mathbf{w} = 1$, and $\mathbf{w}_u$ is feasible when $\mathbf{g} = \mathbf{1}$, the optimal value cannot exceed the feasible value: $\MSE(\mathbf{w}_*) \le \MSE(\mathbf{w}_u)$.
Equality holds when $\mathbf{w}_u$ itself satisfies the KKT conditions, \ie when $\boldsymbol{\Sigma} \mathbf{w}_u \propto \mathbf{g}$.
For $\mathbf{g} = \mathbf{1}$, this requires $\boldsymbol{\Sigma} \mathbf{1} \propto \mathbf{1}$, \ie all row sums of $\boldsymbol{\Sigma}$ are equal.

\section{Adaptive-K Algorithm}
\label{app:algorithm}

Algorithm~\ref{alg:adaptive_k} states the procedure behind \S\ref{sec:adaptive_k}.
The pilot stage is the only stage that uses gold labels, and only on the $n$ pilot instances.
The deployment stage is label-free.
The pilot's compute is $4n$ sampled paths once per (model, prompt, task) configuration, which the net-cost column of Table~\ref{tab:adaptive_k} already charges.

\begin{algorithm}[h]
\caption{Adaptive-K path-budget selection}
\label{alg:adaptive_k}
\begin{algorithmic}[1]
\REQUIRE model $M$, task instances $\mathcal{D}$, pilot size $n$ (default $50$--$100$), threshold $\varepsilon > 0$ (default $0.025$), maximum budget $K_{\max}$ (default $32$)
\ENSURE operating point $K^*$ and final answers
\STATE \textbf{Pilot:} for each of $n$ pilot instances, sample $K_0{=}4$ paths from $M$
\STATE Extract discrete answers and score them against gold to obtain $Y_i^{(k)}$
\STATE Estimate $\bar{p}$ and $\hat{c}$ from the $Y_i^{(k)}$ by \eqref{eq:corr_c}
\IF{$\bar{p} \in \{0, 1\}$}
\STATE $K^* \leftarrow K_{\max}$ \COMMENT{degenerate pilot}
\ELSE
\STATE $\hat{c} \leftarrow \mathrm{clip}(\hat{c}, 0.05, 0.99)$; $K^* \leftarrow \min\big(\max(\lceil (\sqrt{(1{-}\hat{c})/\varepsilon} - 1)/\hat{c} + 1 \rceil, 1), K_{\max}\big)$ by \eqref{eq:adaptive_k}
\ENDIF
\STATE \textbf{Deployment:} for each new instance, sample $K^*$ paths, extract answers and return the plurality vote
\end{algorithmic}
\end{algorithm}

\section{Adaptive-K Threshold Sensitivity}
\label{app:eps_sensitivity}

Table~\ref{tab:eps_sensitivity} shows that the Adaptive-K rule is robust to the threshold $\varepsilon$: across a $10\times$ range ($\varepsilon \in [0.01, 0.1]$), retained accuracy stays within $97$--$103\%$ of MV@$K{=}32$ for all three models.

\smallskip\noindent\textbf{On retention above 100\%.}
Several cells in Tables~\ref{tab:adaptive_k} and~\ref{tab:eps_sensitivity} show MV@$K^*$ exceeding MV@$K{=}32$ (e.g., Mistral-7B GSM8K at $103\%$, $\varepsilon{=}0.025$). This is consistent with finite-sample variance: a paired bootstrap over instances puts the 95\% CI of MV@$K^*{-}$MV@32 around zero in all five cells of Table~\ref{tab:adaptive_k}, so a gap of this size is within sampling noise. The saturation prediction is therefore a guideline for the operating point rather than a strict upper bound on observed accuracy.

\smallskip\noindent\textbf{Net compute including pilot.}
The compute savings reported in Table~\ref{tab:adaptive_k} are nominal in the sense that they count only the $K^*$ paths used at the operating point. A deployment that estimates $\hat{c}$ from a fresh $K{=}4$ pilot pays for those four pilot paths as well, so the net inference cost relative to full $K{=}32$ self-consistency is $(K^*+4)/32$. For our five cells this is $25$--$44\%$ (vs.\ a nominal $K^*/32$ of $13$--$31\%$); in amortized deployments where $\hat{c}$ is reused across many queries the pilot cost vanishes and the nominal figure applies.

\begin{table}[h]
\caption{Sensitivity of Adaptive-K to threshold $\varepsilon$ on GSM8K (5 seeds $\times$ $n{=}100$ = 500 pooled). Retained = MV@$K^*$ as a percentage of MV@$K{=}32$.}
\label{tab:eps_sensitivity}
\centering
\vspace{5pt}
\begin{tabular}{lccccc}
\toprule
Model & $\varepsilon$ & $K^*$ & MV@$K^*$ & Retained \\
\midrule
\multirow{4}{*}{Qwen-7B}
  & 0.01 & 10 & 82.4\% & 101\% \\
  & 0.025 & 6 & 81.6\% & 100\% \\
  & 0.05 & 5 & 81.4\% & 99\% \\
  & 0.1 & 3 & 79.8\% & 97\% \\
\midrule
\multirow{4}{*}{Llama-8B}
  & 0.01 & 13 & 78.8\% & 99\% \\
  & 0.025 & 8 & 78.1\% & 98\% \\
  & 0.05 & 5 & 76.8\% & 97\% \\
  & 0.1 & 4 & 76.7\% & 97\% \\
\midrule
\multirow{4}{*}{Mistral-7B}
  & 0.01 & 16 & 42.4\% & 100\% \\
  & 0.025 & 10 & 43.6\% & 103\% \\
  & 0.05 & 7 & 43.6\% & 103\% \\
  & 0.1 & 5 & 41.6\% & 98\% \\
\bottomrule
\end{tabular}
\end{table}

\section{Benchmark Details}
\label{app:benchmarks}

We evaluate on 12 benchmarks spanning six task domains:
\emph{Math}: GSM8K~\cite{gsm8k}, MATH~\cite{math};
\emph{QA}: HotpotQA~\cite{hotpotqa}, TriviaQA~\cite{triviaqa};
\emph{Science/MC}: ARC-Challenge~\cite{arc}, MMLU~\cite{mmlu};
\emph{Code}: MBPP~\cite{mbpp}, CruxEval~\cite{cruxeval};
\emph{Commonsense}: HellaSwag~\cite{hellaswag}, WinoGrande~\cite{winogrande};
\emph{Natural language understanding (NLU)}: BoolQ~\cite{boolq}, DROP~\cite{drop}.

These benchmarks span a range of \emph{answer-space structures}, classified by the \emph{effective evaluated output space} (the post-extraction value compared against the gold answer): \emph{numeric} (GSM8K, MATH), \emph{open text} (HotpotQA, TriviaQA, DROP; scored by extractive matching), \emph{bounded choice} of 3--4 options (ARC, MMLU, HellaSwag), \emph{binary} (BoolQ as yes/no, WinoGrande as 2-way), and \emph{code pass/fail} (MBPP, CruxEval; scored by execution outcome). For code, the grading space is the binary pass/fail signal returned by the test suite, which is the relevant axis for the correlation analysis. The per-benchmark labels appear in the \emph{Answer space} column of Table~\ref{tab:full_matrix}.

\smallskip\noindent\textbf{Licenses.}
Qwen2.5-0.5B/7B/32B-Instruct, Qwen3.5-9B and Mistral-7B-Instruct-v0.3 are released under Apache-2.0, and Llama-3.1-8B-Instruct under the Llama 3.1 Community License.
GSM8K, MATH, MMLU, HellaSwag and CRUXEval are released under MIT, and TriviaQA under Apache-2.0.
HotpotQA, ARC and DROP are released under CC BY-SA 4.0, BoolQ under CC BY-SA 3.0, MBPP under CC BY 4.0, and WinoGrande under CC BY.
All assets are used for evaluation only. The released generation records contain the benchmarks' gold answers and the models' outputs, and their dataset card states these license terms.

\section{Prompt Templates}
\label{app:templates}

\noindent\textbf{Math tasks (GSM8K, MATH).} $K{=}8$ templates:
\begin{enumerate}
\item Standard CoT: ``Solve step by step.''
\item Algebra: ``Use algebraic equations. Define variables, write equations, solve.''
\item Estimate+Precise: ``First estimate, then solve precisely.''
\item Decompose: ``Break into sub-problems, solve each, combine.''
\item Work Backwards: ``Work backwards from a hypothetical answer, then solve forward.''
\item Verify: ``Solve, then verify by substituting back.''
\item Concise: ``Show key steps only.''
\item Explain: ``Explain as if teaching a student.''
\end{enumerate}

\noindent\textbf{QA tasks (HotpotQA, TriviaQA).} $K{=}8$ templates:
\begin{enumerate}
\item Standard: ``Answer based on the context.''
\item Chain-of-thought: ``Reason step by step.''
\item Extract-then-answer: ``Identify key facts, then answer.''
\item Direct: ``Give a short, direct answer.''
\item Verify: ``Answer, then verify against context.''
\item Decompose: ``Break into sub-questions, answer each, combine.''
\item Concise: ``Answer in as few words as possible.''
\item Explain: ``Explain reasoning thoroughly, then give final answer.''
\end{enumerate}

\noindent\textbf{Science/MC tasks (ARC-Challenge, MMLU).} $K{=}8$ templates:
\begin{enumerate}
\item Direct: ``Answer the following multiple choice question.''
\item Step-by-step: ``Think step by step, then select the best answer.''
\item Elimination: ``Eliminate wrong answers first, then choose.''
\item Letter only: ``Give the answer directly with just the letter.''
\item Explain options: ``Explain why each option is right or wrong, then select.''
\item Scientific: ``Use your scientific knowledge to answer.''
\item Real-world: ``Consider real-world examples to determine the answer.''
\item Teacher: ``What would a teacher say is the correct answer?''
\end{enumerate}

\noindent\textbf{Code generation (MBPP).} $K{=}8$ templates:
\begin{enumerate}
\item Standard: ``Complete the following Python function.''
\item Algorithmic: ``Think about the approach step by step, then complete.''
\item Test-driven: ``Consider what test cases to handle, then implement.''
\item Concise: ``Write the most concise implementation possible.''
\item Defensive: ``Write a robust implementation with edge case handling.''
\item Efficient: ``Choose the most efficient algorithm and implement.''
\item Readable: ``Write clean, readable code with meaningful variable names.''
\item Alternative: ``Think of an alternative approach, then implement.''
\end{enumerate}

\noindent\textbf{Code understanding (CruxEval).} $K{=}8$ templates:
\begin{enumerate}
\item Direct: ``What is the output of the following Python code?''
\item Trace: ``Trace through this code step by step, then give the output.''
\item Mental execution: ``Execute this function mentally and predict the result.''
\item Output only: ``Give only the output value, nothing else.''
\item Return value: ``What does the function return?''
\item Logic: ``Analyze the code logic, then predict the output.''
\item Edge cases: ``Think about edge cases, then give the output.''
\item Simulate: ``Simulate a Python interpreter running this code.''
\end{enumerate}

\noindent\textbf{Commonsense (HellaSwag).} $K{=}8$ templates:
\begin{enumerate}
\item Most likely: ``Choose the most likely continuation of the scenario.''
\item Step-by-step: ``Think step by step about what happens next, then select.''
\item Elimination: ``Eliminate unlikely continuations first, then choose.''
\item Letter only: ``Give just the letter of the most likely continuation.''
\item Common sense: ``Consider real-world common sense to determine the continuation.''
\item Visualization: ``Visualize the scenario, then choose what would happen next.''
\item Cause-effect: ``Think about cause and effect to select the continuation.''
\item Natural: ``What would naturally follow in this situation?''
\end{enumerate}

\noindent\textbf{Commonsense (WinoGrande).} $K{=}8$ templates:
\begin{enumerate}
\item Standard: ``Which option best fills the blank?''
\item Context clues: ``Think about context clues step by step.''
\item Elimination: ``Consider both options and eliminate the wrong one.''
\item Binary: ``Give just A or B.''
\item Real-world: ``Use real-world knowledge to decide.''
\item Careful reading: ``Read carefully, focus on meaning, and select.''
\item Logical: ``Which option makes the sentence logically coherent?''
\item Semantic: ``Think about what makes grammatical and semantic sense.''
\end{enumerate}

\noindent\textbf{Reading comprehension (BoolQ).} $K{=}8$ templates:
\begin{enumerate}
\item Standard: ``Based on the passage, answer Yes or No.''
\item Chain-of-thought: ``Read carefully and reason step by step, then answer.''
\item Evidence first: ``Find the relevant evidence, then answer.''
\item Direct: ``Answer only Yes or No, nothing else.''
\item Quote: ``First quote the relevant passage part, then answer.''
\item Support/contradict: ``Does the passage support or contradict the question?''
\item Stated vs.\ implied: ``What does the passage actually say vs.\ imply?''
\item Explicit or inferred: ``Is the answer explicitly stated or inferred?''
\end{enumerate}

\noindent\textbf{Discrete reasoning (DROP).} $K{=}8$ templates:
\begin{enumerate}
\item Direct: ``Read the passage and answer the question.''
\item Step-by-step: ``Think step by step, count or compare as needed.''
\item Identify facts: ``Find the relevant numbers or facts, then answer.''
\item Short answer: ``Give a short, direct answer.''
\item Show reasoning: ``Show your arithmetic or reasoning, then answer.''
\item Focus details: ``Focus on the specific details asked about.''
\item Extract-compute: ``Extract relevant information, then compute the answer.''
\item Trace quantities: ``Trace the quantities mentioned to answer.''
\end{enumerate}

\section{Proof of Proposition~\ref{prop:bridge} (Monotonicity of $c$ in $\rho_d$)}
\label{app:bridge_proof}

Under the Gaussian decision surrogate (Assumption~\ref{asm:gaussian_decision}), each path's correctness is determined by thresholding a Gaussian decision score: $Y_k = \mathbf{1}\{M_k > 0\}$ with $\Pr(M_k > 0) = p$ and $\Corr(M_j, M_k) = \rho_d$.

Consider two paths $j, k$.
The joint correctness probability is:
\begin{equation}
\Pr(Y_j{=}1, Y_k{=}1) = \Phi_2\!\left(\Phi^{-1}(p),\, \Phi^{-1}(p);\, \rho_d\right),
\end{equation}
where $\Phi_2(\cdot, \cdot; \rho_d)$ is the bivariate standard normal CDF with correlation $\rho_d$, and $\Phi^{-1}(p)$ is the threshold corresponding to single-path accuracy $p$.
This is a direct application of the thresholded-Gaussian representation in Assumption~\ref{asm:gaussian_decision}.

The correctness correlation is then:
\begin{equation}
c(\rho_d) = \frac{\Phi_2(\Phi^{-1}(p), \Phi^{-1}(p); \rho_d) - p^2}{p(1 - p)}.
\end{equation}

To show monotonicity, we use the known result that the bivariate normal CDF $\Phi_2(a, b; \rho)$ is strictly increasing in $\rho$ for all fixed $a, b$ (see, e.g., \cite{tse2005fundamentals}, Appendix A).
Since $p \in (0,1)$ is fixed, $\Phi^{-1}(p)$ is finite, and both $p^2$ and $p(1-p)$ are constants with respect to $\rho_d$.
Therefore $c(\rho_d) = [\Phi_2(\cdot;\rho_d) - \text{const}] / \text{const}$ inherits strict monotonicity: $\partial c / \partial \rho_d > 0$ for all $\rho_d \in (0,1)$.

This confirms that higher decision-layer correlation always produces higher binary correctness correlation, justifying the use of $\hat{c}$ as a monotone proxy for the unobservable $\rho_d$.

\section{Derivation for Remark~\ref{rem:multiclass} (Multiclass Plurality Heuristic)}
\label{app:multiclass_proof}

Consider $|\mathcal{A}| = M > 2$ answer classes.
Each path produces the correct answer with probability $p$ and an incorrect answer with probability $1 - p$.
Under uniform fragmentation, wrong answers are spread equally among $M{-}1$ alternatives, so each wrong answer has probability $(1{-}p)/(M{-}1)$.

For plurality vote, the correct answer wins if it receives more votes than every individual wrong answer.
We reduce this to a pairwise contest: the correct answer ($p$) competes against the single most popular wrong alternative ($(1{-}p)/(M{-}1)$).

The effective binary accuracy for this pairwise contest is:
\begin{multline}
p' = \frac{p}{p + (1{-}p)/(M{-}1)} = \frac{p(M{-}1)}{p(M{-}1) + (1{-}p)} \\ = \frac{p(M{-}1)}{pM - 2p + 1}.
\end{multline}

For $M{=}4$ and $p{=}0.5$: $p' = (0.5 \times 3)/(0.5 \times 4 - 1 + 1) = 1.5/2 = 0.75$.
More generally, $p' > p$ whenever $M > 2$, suggesting that the binary-collapsed formula provides a conservative bound on multiclass plurality accuracy.
This argument provides an intuitive lower bound under uniform fragmentation; it does not constitute a formal proof of stochastic dominance over the full multinomial count vector, which would require coupling arguments on the joint count distribution.

\end{document}